
\documentclass{article}
\usepackage{float}
\usepackage{framed}
\usepackage{multicol}
\usepackage{subcaption} 
\usepackage{placeins} 
\usepackage{makecell}
\usepackage{booktabs} 
\usepackage[percent]{overpic}
\usepackage{tabularx}
\usepackage{color,soul}

\usepackage{array}

\newcommand{\confcell}[3]{
  \makecell[l]{#1\hspace{0.2em}\makecell[r]{\scriptsize #2\\[-0.6ex]\scriptsize #3}}
}
\usepackage{iclr2026_conference,times}
\usepackage{tcolorbox}

\usepackage[nointegrals]{wasysym}
\usepackage{turnstile}
\usepackage{amsmath}
\usepackage{amsthm}
\usepackage{bbm}
\usepackage{wrapfig}

\usepackage[utf8]{inputenc} 
\usepackage[T1]{fontenc}    
\usepackage{hyperref}       
\usepackage{xurl}
\usepackage{booktabs}       
\usepackage{amsfonts}       
\usepackage{nicefrac}       
\usepackage{microtype}      
\usepackage[dvipsnames]{xcolor}         

\usepackage{wrapfig}
\usepackage{bbm}
\usepackage{mathtools}
\usepackage{enumitem}
\usepackage{subcaption} 
\usepackage{wrapfig}
\usepackage{algpseudocode}
\usepackage{algorithm}
\algdef{SE}[DOUNTIL]{Do}{Until}{\algorithmicdo}[1]{\textbf{until}\ #1}
\newtheorem{proposition}{Proposition}

\newcommand{\orange}[1]{\textcolor{black}{#1}}

\definecolor{britishracinggreen}{rgb}{0.0, 0.26, 0.15}




\iclrfinalcopy
\title{A Balanced Neuro-Symbolic Approach for Commonsense Abductive Logic}

\author{Joseph Cotnareanu \textsuperscript{\rm 1, 3, 4}, Didier Chetelat\textsuperscript{\rm 2}, Yingxue Zhang\textsuperscript{\rm 2}, Mark Coates\textsuperscript{\rm 1,3,4} \\
\textsuperscript{\rm 1} McGill University
\textsuperscript{\rm 2} Huawei Noah's Ark 
Lab \\
\textsuperscript{\rm 3} International Laboratory on Learning Systems (ILLS) 
\textsuperscript{\rm 4} MILA
\\ \texttt{\{joseph.cotnareanu, mark.coates\}@mail.mcgill.ca}, \\\texttt{\{didier.chetelat, yingxue.zhang\}@huawei.com}}

\begin{document}
\raggedbottom
\maketitle

\begin{abstract}
Although Large Language Models (LLMs) have demonstrated impressive formal reasoning abilities, they often break down when problems require complex proof planning. One promising approach for improving LLM reasoning abilities involves translating problems into formal logic and using a logic solver. Although off-the-shelf logic solvers are in principle substantially more efficient than LLMs at logical reasoning, they assume that all relevant facts are provided in a question and are unable to deal with missing commonsense relations.  In this work, we propose a novel method that uses feedback from the logic solver to augment a logic problem with commonsense relations provided by the LLM, in an iterative manner. This involves a search procedure through potential commonsense assumptions to maximize the chance of finding useful facts while keeping cost tractable. On a collection of pure-logical reasoning datasets, from which some commonsense information has been removed, our method consistently achieves considerable improvements over existing techniques, demonstrating the value in balancing neural and symbolic elements when working in human contexts.
\end{abstract}

\section{Introduction}
Large Language Models (LLMs) have demonstrated impressive abilities to reason formally, often via chain-of-thought reasoning~\citep{cot_wei_2022}. \orange{While the state of the art modern LLM-based systems show impressive reasoning capabilities, it is unclear whether this comes from the LLM itself, or sophisticated post-learning refinement algorithms. At the same time, open-sourced LLMs still demonstrate an inability to naturally scale to problems that require complex proof planning~\citep{prontoqa_saparov_2023, dziri23faith}}. Such problems are exactly the type on which symbolic logical solvers excel: such solvers have a long history and were for a long time considered a key component of any path to artificial intelligence~\citep{Nilsson91}. Nevertheless, they are greatly restricted by their need for problems to be stated in symbolic language and for every relevant fact to be provided as input. These constraints have ultimately limited them to highly specialized applications, and they have never had the broad impact that was hoped for \citep{tumultuous_search_crevier_1993}.

These complimentary strengths of neural and symbolic methods have motivated a revival of interest in neuro-symbolic methods, where an LLM incorporates a logic solver to improve its reasoning abilities~\citep{satlm_ye_2023, symba_lee_2025, faithful_lyu_2023, linc_olausson_2023}. In these approaches, the LLM translates problems formulated in natural language into symbolic language, addressing one of the key deficiencies of a purely symbolic approach. Nonetheless, these hybrid systems remain impractical because they are ultimately purely deductive: that is, every relevant fact must be provided as input. This means that the symbolic solvers are often unable to reach a conclusion simply because obvious, commonsense assumptions are left unstated, and it is often difficult to predict which should be included until one is presented with a failed reasoning chain.

For example, consider the problem in Figure~\ref{fig:example}. A logic solver would return ``unknown'' for the target query as, formally speaking, neither its truth nor its falsehood is implied by the premises. A human, however, would easily solve this problem by supplying the additional commonsense fact that white surfaces reflect light ($turns\_white(fox, winter) \rightarrow reflects(fox, sun)$). This ability to supply missing information is usually themed abductive reasoning, and is a key mark of human intelligence.

\begin{figure}
    
    \centering
    \includegraphics[width=0.9\linewidth]{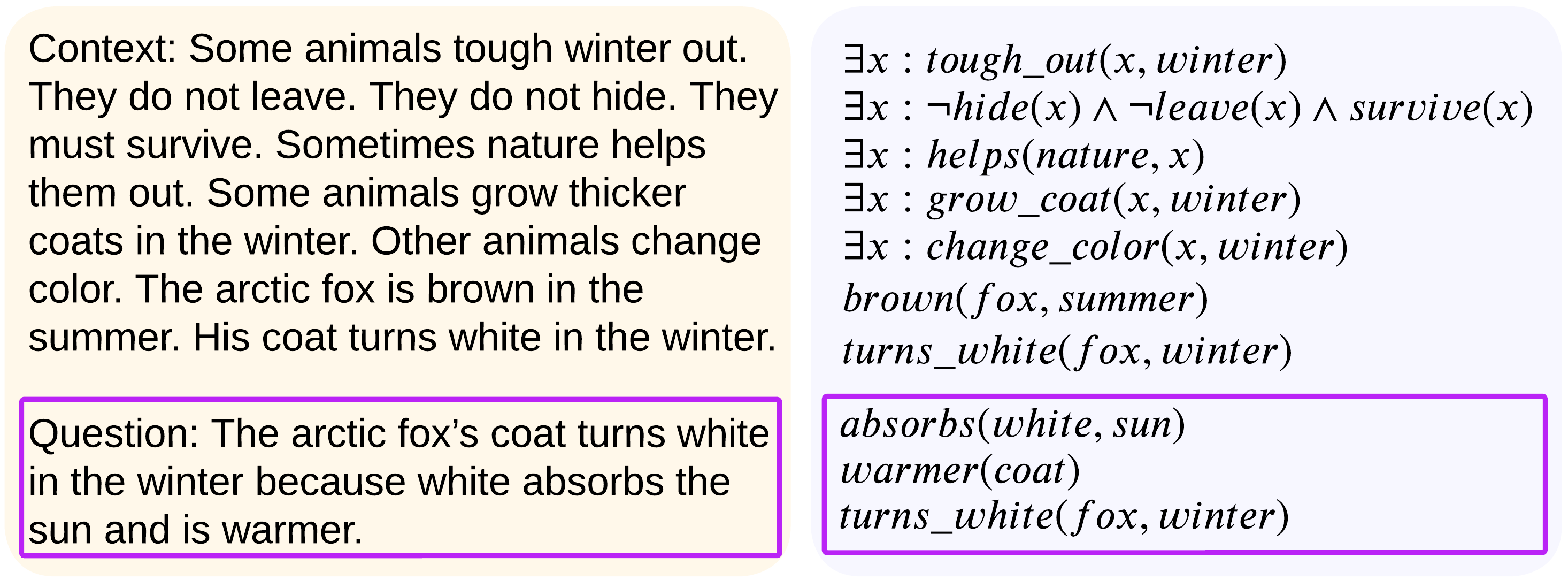}
    \caption{An example from a children's comprehension exercise booklet\,\protect\footnotemark. Left: the problem phrased in human language. Right: the same problem translated to first-order-logic. }    \label{fig:example} 
\end{figure}

The limitation of current neuro-symbolic LLM systems to deductive reasoning means that they have mostly been so far of theoretical interest, since they tend to break down when confronted with more complex problems where enumerating every possible background fact is not realistic. However, besides their translation skills, LLMs possess also another striking ability: their training on prodigious amounts of internet data has made them very adept at recognizing commonsense statements, to the point where they have been regarded as potential universal databases \citep{petroni19language}. In a way, LLMs seem to have internalized most commonsense knowledge.

\begin{wrapfigure}{r}{0.50\textwidth}
  \includegraphics[width=1\linewidth]{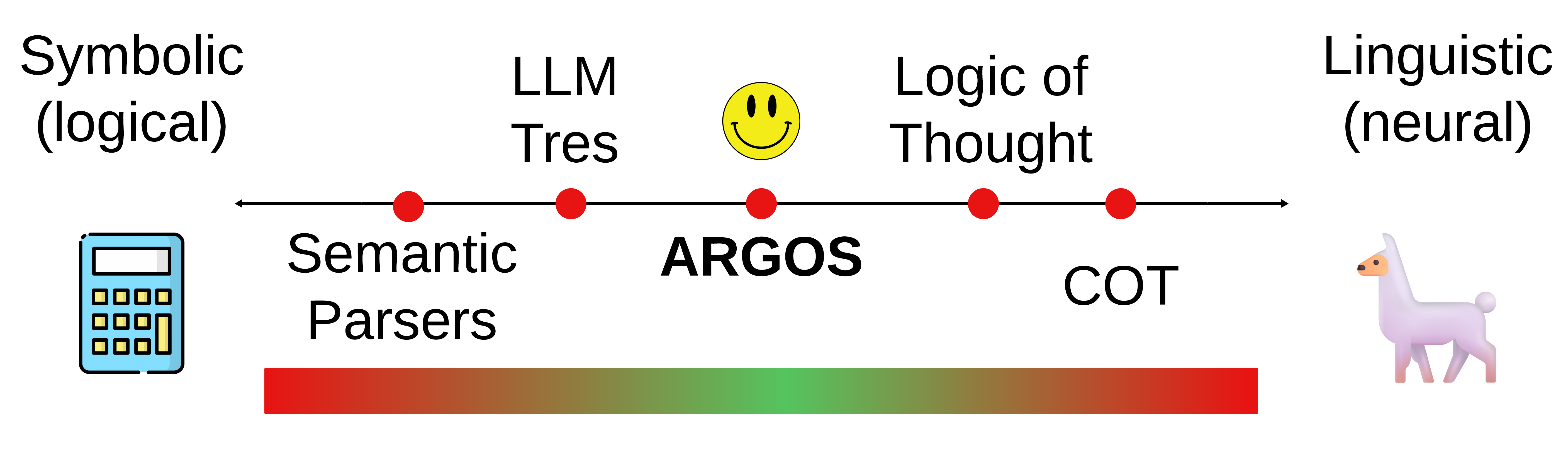}
  \caption{Symbolic-Linguistic Spectrum depicting the positioning of LLM-Tres~\citep{llm-tres}, Logic-of-Thought~\citep{lot}, and Chain-of-Thought (COT)~\citep{cot_wei_2022} relative to our approach.}
   \label{fig:smiley}
\end{wrapfigure}

This realization has led some works to use an LLM itself to supply missing but relevant clauses when reasoning. Notably, \citet{llm-tres} proposed a method that operates an exhaustive search over a heavily restrained set of rules in the symbolic space, whereas \citet{lot} proposed a method that uses LLM prompting to produce new rules which might be deducible from the given logical context. \orange{While} these methods lie on opposite ends of the symbolic-linguistic reasoning spectrum (Figure \ref{fig:smiley}), they both limit themselves to searching over such a restricted space of possible commonsense that they cannot solve practical problems. 
\footnotetext{Taken from \url{https://www.ereadingworksheets.com/worksheets/reading/ nonfiction-passages/wintertime}. We selected a choice from multiple choice question 3 and re-phrased it as a True/False question, according to the logic-problem framing.}

In this work, we seek to improve AI reasoning abilities by using an LLM to provide relevant unstated commonsense clauses to a logic solver, but unlike previous works, without imposing significant constraints on the shape or content of such clauses. Furthermore, and most importantly, our method ARGOS (\textbf{A}bductive \textbf{R}easoning with \textbf{G}eneralization \textbf{O}ver \textbf{S}ymbolics) can abduce  \orange{propositions} not previously \orange{instantiated} in the input problem. To compensate for the far more general search space, we guide the search using feedback from the logic solver in the form of the SAT problem backbone, another novel contribution. 
The resulting system strikes a balance between linguistic and symbolic approaches, allowing us to use both their strengths while minimizing their weaknesses to achieve true abductive reasoning.

The contributions of this paper are as follows.
\begin{itemize}[leftmargin=*]
    \item  We propose a novel framing of the commonsense logical reasoning problem founded upon classical logical principles and an aim towards more realistic use-cases.
    \item We introduce a novel algorithm that (i) searches over larger spaces of commonsense facts; and (2) uses logic solver feedback in the form of the backbone graph to increase practicality and efficiency.
    \item We demonstrate empirically on multiple benchmarks and large language models that our method 
    improves substantially over existing symbolic and neural methods on abductive reasoning problems where background information is missing.
   
\end{itemize}

\section{Related Work} \label{sec:rel}

Previous LLM-related logical reasoning methods 
combine symbolic and neural approaches, but usually rely much more on one or the other. Appendix \ref{app:lr} provides an extended review.


\paragraph{Neural Methods}

\citet{cot_wei_2022} were the first to present a framework for LLM-based reasoning, showing that providing examples of rationales for answers to questions can induce the LLM to do the same, leading to improved accuracy. \citet{kojima2022large} showed that this can be induced without any few-shot examples by prepending the sentence ``Let's think step by step'' before generating an answer. This is known as ``Chain of Thought'' (COT). Following this, \citet{SC} proposed self-consistency (SC), using COT multiple times and taking the mode as the prediction. However,  \citet{prontoqa_saparov_2023} observed that COT and SC suffer from challenges in proof planning --- rationale steps tend to be factual but of low value. This motivated guidance of the LLM at a step-level. \citet{tot} proposed Tree of Thoughts (TOT), which explores hand-crafted trees using an LLM to solve reasoning tasks. TOT is poorly suited to logical reasoning settings as logic problems have highly variable tree-structures. \citet{LAMBADA} and \citet{symba_lee_2025} proposed more logic-focused methods, with reverse reasoning, starting at the answer and ending at the problem. These back-chaining methods, however, underperform symbolic approaches. 

\paragraph{Symbolic Methods}

Acknowledging that LLMs are poor proof-planners, a series of methods, including F-COT~\citep{faithful_lyu_2023} and SAT-LM~\citep{satlm_ye_2023}, proposed to offload the reasoning burden from the LLM to more specialized tools. In these works, the LLM converts the text to symbolic logic, and a solver is then employed. Logic-LM~\citep{logic-lm_pan_2023} extended this to include a self-refinement step. While these methods perform well on simple datasets, they fail to account for ambiguity and the exclusion of common knowledge. Addressing this, \citet{lot} and \citet{LReasoner} proposed algorithms that produce new clauses via logical deduction and then add the logic back to the text for an LLM to solve. While this might help the LLM, it does not add information to the problem, because any added relations are already deducible. Instead of producing clauses via deduction, \citet{llm-tres} proposed a method that exhaustively searches for new single-\orange{proposition} modus-ponens clauses. However, the search is conducted only over \orange{the propositions} from the question, and repeated until the problem is solvable by classical logic, diminishing robustness. This search space is highly restricted and leaves out nearly all necessary information for some logic problems.
\vspace{-0.1cm}

\section{Background}
\label{sec:background}
\vspace{-0.1cm}

\textbf{Propositional logic} is a logical system \orange{built around propositions, which are statements of fact such as ``It is sunny" or ``I need an umbrella'' which can be true or false. Propositions are often denoted by single letter variables such as $A$ or $B$, called a propositional variable, which can be tied together by logical connectives (such as $\wedge$, $\vee$ or $\rightarrow$) to form further compound propositions.}


A \textbf{deductive propositional logic problem} is \orange{composed of a set of propositional variables $\mathcal{V}$, a set of propositions (each represented by a propositional variable in $\mathcal V$) and compound propositions (built by using logical connectives to connect propositions by their representative propositional variables in $\mathcal V$) called the premises $\mathcal{P}=\{P_1,\dots,P_K\}$, and a proposition or compound proposition $Q$ also built from those variables, called the query.} The premises are given to be true ($\vdash\mathcal{P}$), and the goal of the problem is to determine whether they imply the query, $\mathcal{P}\vdash Q$, or its negation, $\mathcal{P}\vdash \neg Q$. Such problems are usually solved by translating them into two Boolean Satisfiability (SAT) problems, one for $Q$ and one for $\neg Q$. \orange{Let $\mathcal{L}(\mathcal{V})=\{A\,|\,A\in\mathcal{V}\}\cup\{\neg A\,|\,A\in\mathcal{V}\}$ denote the set of all so-called literals of the problem. The \textbf{backbone} of the problem is the collection of all those literals which are implied by the premises, $\text{backbone}(\mathcal{P})=\{\orange{L\in\mathcal{L}}\,\vert\,\mathcal{P}\vdash L\}$. In effect, they are values for the propositions represented by the variables in the problem which can be inferred from the premises.}
In an \textbf{abductive commonsense propositional logic problem} 
the premises $\mathcal{P}$ entail neither the query $Q$ nor its negation $\neg Q$: the problem is underdetermined. Instead, one must augment the premises with additional commonsense propositions $\mathcal{C}$, which represent background facts or knowledge left unstated in the problem, until either $(\mathcal{P}\land\mathcal{C})\vdash Q$ or $(\mathcal{P}\land\mathcal{C})\vdash \neg Q$. Thus, the goal of an abductive problem is to not only find the truth-value of $Q$, but also a corresponding set of commonsense propositions $\mathcal{C}$ to complete the problem. We assume that  \orange{$\mathcal P \land \mathcal{C} \not\vdash \bot$}, that is that the premises $\mathcal P$ are not contradictory with commonsense \orange{(i.e. that $P$ and $C$ are consistent)}. One can show that, under this assumption, the answer to the problem will not depend on the choice of commonsense set $\mathcal{C}$: details are provided in Appendix \ref{app:well-foundedness}.

In practice, the problems we encounter in real life are often stated in terms of first-order logic. \orange{\textbf{First-order logic} is a logical system that extends propositional logic to entities and their predicates. An $n$-ary predicate is a symbol of a relation, such as $\mathit{MotherOf}$, that takes as arguments $n$ terms such as $x$ and $y$ to become a formula $MotherOf(x,y)$, and becomes true or false when constants, such as $\mathit{Alice}$ and $\mathit{Bob}$, are used as a grounding for its arguments. Predicates can be connected by logical connectors, and can also be quantified over a discrete or abstract set of entities with $\forall$ and $\exists$, to form compound propositions such as $\forall x\forall y [\mathit{MotherOf}(x,y) \rightarrow \neg \mathit{Male}(x)]$.}

\orange{First-order logic formulas over a finite set of entities can always be converted into equivalent propositional logic formulas, a process known as \textbf{grounding}, by instantiating a propositional variables for every predicate $F(x)$ and entity $A$, and expanding $\forall x\,F(x)$ into the compound proposition ($F(A)\wedge F(B)\wedge\dots$) and $\exists x\,F(x)$ into $(F(A)\vee F(B)\vee\dots)$.} Given two propositional literals, we will declare them \textbf{related in first-order logic} if they have an entity in common. For example, $\mathit{MotherOf}(\mathit{Alice}, \mathit{Bob})$ and $\neg \mathit{Male}(\mathit{Alice})$ are related because both involve the entity $\mathit{Alice}$.

\begin{figure}[t]
    \centering
    \includegraphics[width=\linewidth]{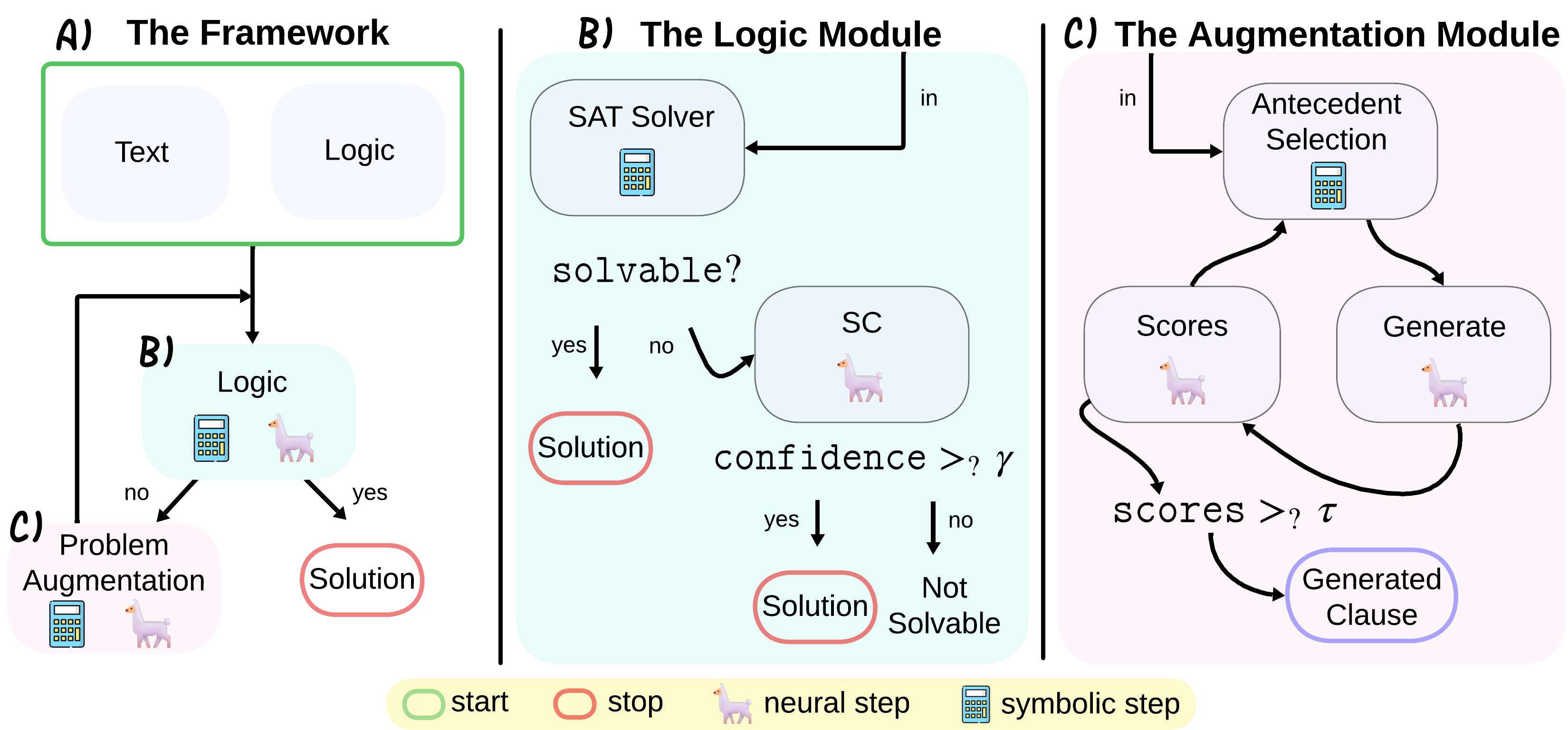}
     \caption{ARGOS at a glance. See Section \ref{sec:algorithm} and Appendix \ref{sec:algorithm-details} for details. (A) Given a propositional logic problem, 
     we iteratively augment the problem with new propositions until it is solvable. (B) We attempt to solve the problem both with a logic solver, and with self-consistency \citep{SC}. (C) If we fail, we attempt to add additional commonsense propositions by combining literals from the backbone as antecedents, and generating a right-hand-side using an LLM. We test the proposition for commonsense and relevance using this same LLM, and add it to the pool if it passes the tests.}   
    
    \label{fig:argos-diagram}
\end{figure}

\subsection{\orange{Problem Statement}}
\label{sec:ps}

We are given an abductive propositional logic problem in both textual and logical form, as defined in Section \ref{sec:background}, and we are also provided with a large language model and a SAT solver. As described, the task is to determine whether the target query is true or false given the premises and some additional commonsense propositions which must be found. Four annotated examples are provided, intended for few-shot prompting. In particular, the task is inference-only and no training phase is involved. We evaluate performance based on the number of correctly answered questions on a test dataset.

\section{Methodology}

We now describe our novel algorithm to tackle the problem described in Section \ref{sec:ps}. This algorithm is described by the diagram in Figure \ref{fig:argos-diagram}, and formally as Algorithm \ref{alg:argos} in Appendix \ref{app:algo}.

\subsection{Algorithm}
\label{sec:algorithm}

We start the algorithm by initializing our set of commonsense propositions as empty, $\mathcal{C}=\{\}$. As shown in module B of Figure \ref{fig:argos-diagram}, we first try to solve the problem using the SAT Solver (\texttt{sat\_solve}) to test whether \orange{either $(\mathcal{P}\land \mathcal{C})\vdash Q$ or $(\mathcal{P}\land \mathcal{C})\vdash \neg Q$}. If it reaches one of these conclusions, our job is finished; if not, we at least obtain from our call the backbone \orange{$\mathcal{B}=\{L\in\mathcal{L}\,\vert\,\mathcal{P}\vdash L\}$}.

Next, still in module B of Figure \ref{fig:argos-diagram}, we attempt to solve the problem using the LLM (\texttt{llm\_solve}) by $k$-shot self-consistency (we use $k=5$ in our experiments). We ask the LLM whether the query is true or false, providing it the premises and the commonsense found so far. Details can be found in Appendix \ref{sec:sc5-solve}. If the fraction of votes pass a certain threshold $\gamma$, we also conclude \orange{either $(\mathcal{P}\land \mathcal{C})\vdash Q$ or $(\mathcal{P}\land \mathcal{C})\vdash \neg Q$} respectively, and the algorithm is finished. This parameter $\gamma$ (initialized at $\gamma=1$ in our experiments) is reduced by a fixed amount $\gamma\leftarrow\gamma-\alpha$ at every iteration ($\alpha=0.1$ in our experiments). \orange{Thus, the maximum cost of our algorithm, in terms of number of COTs required, is bounded at $cost < k\frac{\gamma - 0.5}{\alpha}$, since when $\gamma=0.5$, the fraction of votes is guaranteed to pass the threshold since the vote is over binary classes. For details on empirical cost, see Appendix \ref{app:cost}.}

If neither solving method succeeds in establishing $Q$ or $\neg Q$, we try to add a new commonsense proposition to our pool $\mathcal{C}$, as illustrated in module C of Figure \ref{fig:argos-diagram}. In practice, we define a proposition to be commonsense if it seems true to a large language model without any context. 
To guarantee that the added proposition will grow the problem's backbone, we search for commonsense propositions of the form $L_1\wedge L_2\rightarrow L_\text{right}$, where $L_1$ and $L_2$ are literals in the backbone $\mathcal{B}$, and $L_\text{right}$ is a new literal suggested by the LLM. \orange{Note that $L_1$ and $L_2$ may be the same literal, in which case we in effect have a formula of the form $L_1 \rightarrow L_{right}$, thereby allowing both single and two-literal antecedents. In addition, by adding $\emptyset$ to the set of backbone literals, we can also have $\emptyset \rightarrow L_{right}$, allowing 0-literal antecedents.}
This search routine (\texttt{find\_new\_commonsense}) is described in Algorithm \ref{alg:commonsense_search} in the Appendix. In detail, we start by iterating over pairs of literals in the backbone. We iterate by prioritizing the literals that share the most entities with others in the backbone, \orange{$\text{score}_\mathcal{B}(L) = \#\{L'\in \mathcal{B}\,|\,L'\text{ has an entity in common with }L\}$, so that we take highly-scored literals first.} This gives a measure of relevance of the literal to the problem. \orange{To understand the rationale behind this choice, consider an example in which six relations are known about John and only one is known about Jane. If asked to guess about whom the problem is about, the natural guess would be John, since while problems often include extraneous information, it is rare that the majority of the problem is extraneously included.} Next, for a given pair of literals $L_1,L_2$, we prompt the LLM (\texttt{llm\_generate}) to generate a right-hand-side literal $L_\text{right}$ for  $L_1\wedge L_2\rightarrow L_\text{right}$. In doing so, the LLM might introduce new variables not previously involved in the problem. Details can be found in Appendix \ref{sec:llm-generate}.\orange{We choose this forward-chaining approach rather than a goal-oriented backwards-chaining for simplicity, since LLMs are much easier to prompt for forward-chaining (COT) than backwards chaining (recursive algorithms such as LAMBADA).}

Finally, for each generated $L_\text{right}$, we use the LLM (\texttt{llm\_score}) twice to evaluate it. First, we use the LLM (\texttt{llm\_commonsense\_score}) to score whether $L_1\wedge L_2\rightarrow L_\text{right}$ is likely to be commonsense. Second, we use the LLM again (\texttt{llm\_relevance\_score}) to score whether $L_1\wedge L_2\rightarrow L_\text{right}$ is likely to be relevant to our current context. Each procedure returns a probability between 0 and 1. Details can be found in Appendices \ref{sec:llm-commonsense-score} and \ref{sec:llm-relevance-score}, respectively, and human evaluation in \ref{app:human-score-eval}.

We stop the search at the first new proposition $L_1\wedge L_2\rightarrow L_\text{right}$ whose commonsense and relevance scores are both above a given threshold $\tau$ (we use $\tau=0.3$ in our experiments). When this happens, we update the commonsense set $\mathcal{C}$ with this new proposition, and restart the process. If not, running new iterations will not change anything and we fall back on our best guess, namely the self-consistency estimate. In addition, if after multiple iterations the self-consistency threshold reaches zero, we also exit with the self-consistency estimate.

\subsection{Example}
Consider again the winter fox problem from the introduction section. Let us describe in Figure \ref{fig:winter-fox} a hypothetical run of our ARGOS algorithm to illustrate how it could solve the problem. To simplify the illustration, let us use only the SAT solver, and not self-consistency. We start with the premises (in black) and the query (in purple) on the top left-hand-side.
 
 
 We first run the logic solver, which fails to reach any conclusion, but returns an initial backbone. The algorithm chooses the antecedents $L_1=L_2=turns\_white(fox, winter)$ from this backbone, generating a new proposition $turns\_white(fox, winter)\rightarrow reflects(fox, sun)$. It is commonsensical and relevant to the question, so we add it to the question. We call the SAT solver again, which adds $reflects(fox, sun)$ to the backbone. Next, the algorithm selects the antecedent $L_1=L_2=reflects(fox, sun)$ from the new backbone and generates $reflects(fox, sun)\rightarrow \neg absorbs(fox, sun)$, which is similarly commonsensical and relevant. The SAT solver is called again and adds $\neg absorbs(fox, sun)$ to the backbone. Finally, in the third iteration ARGOS picks $L_1=\neg absorbs(fox, sun)$ and $L_2=turns\_white(fox, winter)$ from the backbone and generates $\neg absorbs(fox, sun)\wedge turns\_white(fox, winter)\rightarrow\neg absorbs(white, sun)$, which is a logical conclusion it deems consistent with commonsense and relevant to the question. At this point, we call the SAT solver again, which concludes that $\neg absorbs(white, sun)$ is true and therefore that the query must be false, returning $\mathcal{P}\land\mathcal{C}\vdash\neg Q$ as conclusion.

\begin{figure}[t]
    \centering
    \includegraphics[clip, trim=13cm 0 0 0, width=\linewidth]{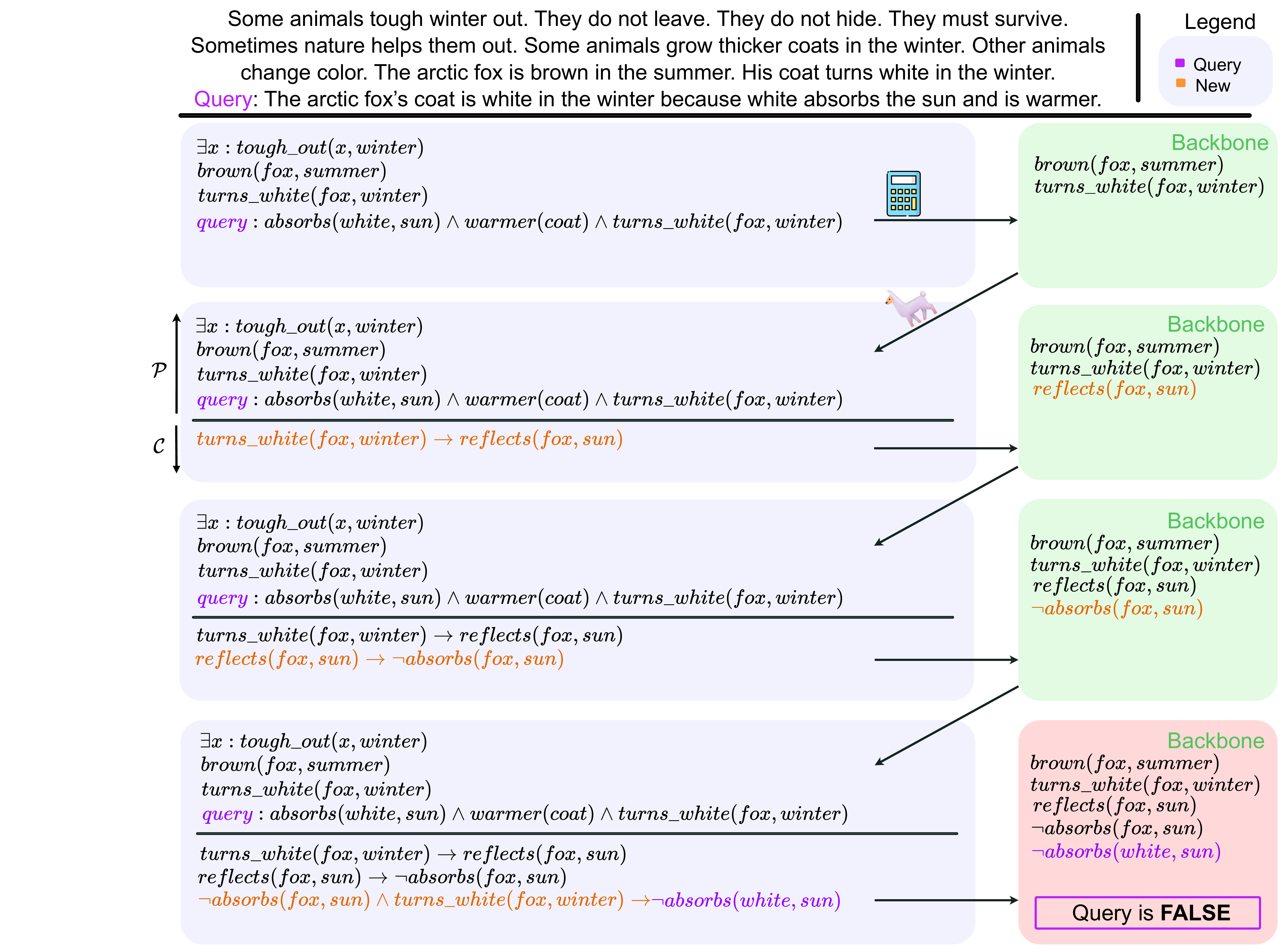}
    \caption{Overview of ARGOS with the winter fox example. We iteratively add to the logic problem and query a logic solver to look for conflicts within the backbone compared to the query. Eventually, we find that  $absorbs(white, sun)$ is $False$, contradicting the query. }
    \label{fig:winter-fox}
\vspace{-10pt}
\end{figure}


\section{Experiments}

\paragraph{Models} We employ Llama3-8B (L8B), Llama3-70B (L70B), and Mistral 7B (M7B) as LLMs. Our method is dependent on access to logit-level outputs, so closed-source models are excluded.  \footnote{Experiments are each conducted on 1 or 2 NVIDIA Tesla V100 GPUs, depending on the LLM's GPU memory requirement. \orange{As a logic solver, we use Cadical \citep{biere2024cadical}.} }

\paragraph{Benchmarks} Unfortunately, there are few natural language reasoning datasets that are strongly logically-structured \textit{and} commonsense-abductive. However, given a dataset of classical commonsense-based logic problems, data transformations to introduce the need for abductions are typically achievable. For a list of common datasets which have proven unsuitable for our setting, and corresponding explanation, see Appendix \ref{app:unused-data}. For our experiments, we use abductive versions of ProntoQA \citep{prontoqa_saparov_2023}, CLUTRR \citep{clutrr_sinha_2019}, and FOLIO \citep{folio_han_2024}. CLUTRR is not originally True/False, but it is multiple-choice. We modify it to be True/False output by making the question randomly either ask if the correct or an incorrect choice is True. \orange{While these datasets are better described with first-order logic, we render them propositional by unrolling their quantified formulas over all instantiated terms. While strongly logical and therefore obvious choices, these datasets are not representative of real-world application or generalizability of our method.} To test our method's generalizability \orange{as well as its broader applicability to real use-cases}, we also include some datasets that are not strictly logical. CosmosQA \citep{cosmosQA} and QUAIL \citep{QUAIL} are reading comprehension MCQA datasets. \orange{Reading comprehension is key for general summarization and interactive QA tasks, which are certainly a common LLM use-case in practice.} ESNLI \citep{esnli} is a short-form natural-language-inference dataset. Each of these datasets requires some form of reasoning, but the structure of both the text and the necessary reasoning is generally fuzzy, requiring subjective interpretation. For the MCQA datasets, we process them into True/False questions similarly to how it was done for CLUTRR. We note that ProntoQA, CosmosQA and ESNLI performances are already saturated by self-consistency. Despite this, the results are valuable as they demonstrate that on these apparently simple tasks ARGOS is able to compare with purely neural methods, avoiding the performance collapse that more symbolic methods encounter. \orange{For few-shot examples, we randomly remove four problems from each dataset and annotate them with COTs, using the same four examples and annotations for each method.} For more dataset details, please see Appendix \ref{app:used-data}.

\paragraph{Evaluation} We compare against  {\em COT} \citep{cot_wei_2022}, {\em Self-Consistency} \citet{SC}, {\em SAT-LM} \citep{satlm_ye_2023}, {\em Logic-of-Thoughts} \citep{lot} and {\em LLM-Tres} \citep{llm-tres}. For a fair comparison with 20-shot self-consistency and LOT, we set ARGOS's hyperparameters such that it makes no more than 20 COT calls per problem on average. Details are provided in Appendix \ref{app:cost}. We report accuracy on the abduction-modified evaluation sets and report results in Table \ref{tab:main_table}.

\begin{figure}[!t]
    \begin{subfigure}[t]{0.5\textwidth} 

        \begin{overpic}[width=\linewidth]{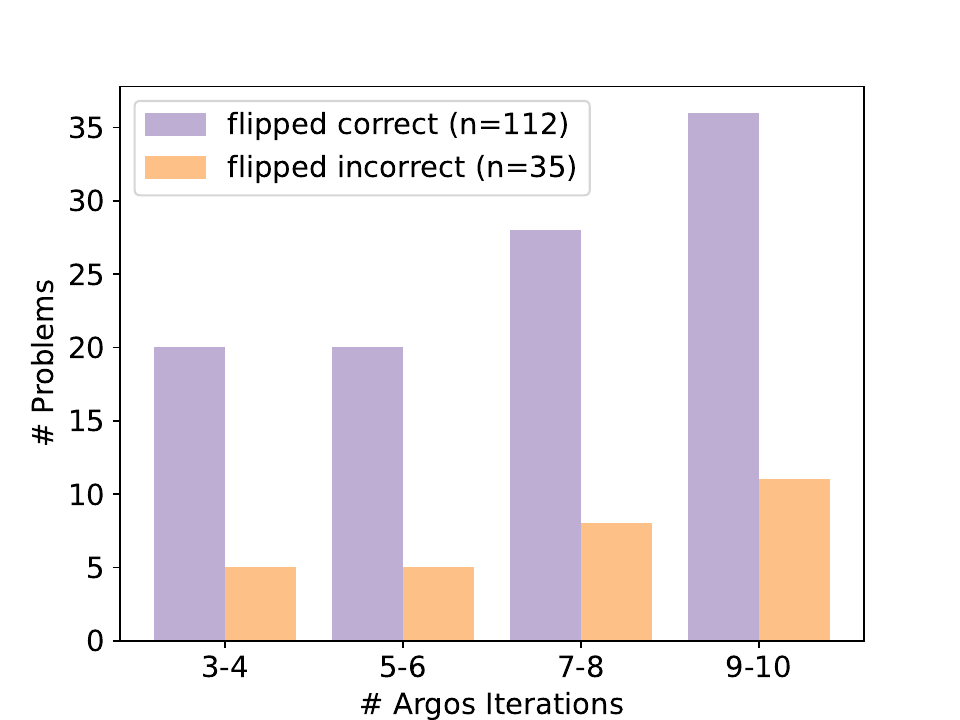}
        \put(5,70){{(a)}}       
        \end{overpic}

    \end{subfigure}
    \begin{subfigure}[t]{0.5\textwidth} 

        \begin{overpic}[width=\linewidth]{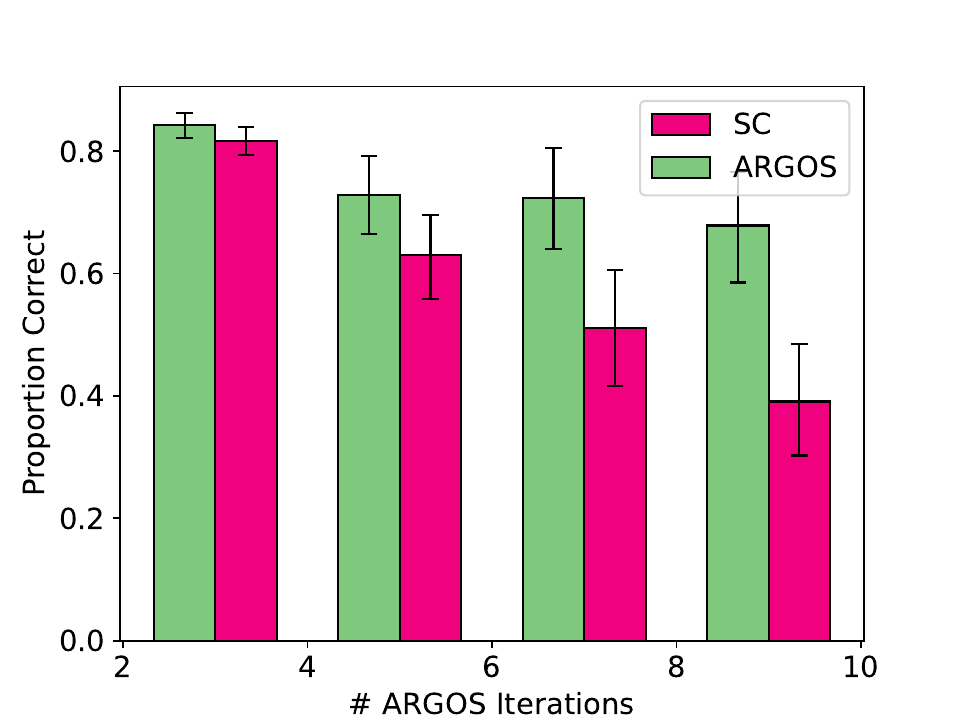}
        \put(5,70){{(b)}}         
        \end{overpic}

    \end{subfigure}
    \caption{(a) The number of CLUTRR problems for which ARGOS flips SC predictions correctly and incorrectly. (b) SC and ARGOS accuracies on CLUTRR subsets, partitioned by the number of ARGOS iterations each  datapoint receives.}
    \label{fig:fliphardness}
\end{figure}

\subsection{Results and Discussion} As can be seen in Table \ref{tab:main_table}, ARGOS provides significant performance improvements over existing methods (up to +13\%). Of the datasets, FOLIO is the most representative of human-generated logical reasoning problems. ARGOS outperforms the baselines for FOLIO, improving performance by 3-10\%. For  more structured problems (CLUTRR), the symbolic components of ARGOS become more reliable, and we see more consistent gains of 6-8\%. On QUAIL, a highly ambiguous dataset that is also formatted in strange ways due to it being constructed by scraping forums and wikis, ARGOS improves compared to self-consistency by up to 13\%, demonstrating its ability to adapt to even non-logical contexts. On ProntoQA, ESNLI and CosmosQA, despite the very competitive neural baseline performances, ARGOS performs comparably. Symbolic baselines (SAT-LM, LoT-20, LLM-Tres) see large performance gaps, at times being reduced to guessing. \orange{SAT-LM, despite the fact that some datasets are strongly logically structured and that we filter out mis-translated problems, still can not answer problems. Even in the best case, it is impossible for purely symbolic methods to handle realistic reasoning scenarios.} \orange{LLM-Tres, despite having abductive capabilities, is so restricted in its abduction space that it it almost never capable of identifying the necessary rules to solve CLUTRR or ESNLI problems.}

\begin{table}[h]

\setlength{\tabcolsep}{2pt} 
\renewcommand{\arraystretch}{1.1} 

\begin{center}
\caption{Binary classification accuracy (True/False) of all methods on the datasets, using the chosen language models. Bolded text indicates that the method has the best performance, and that its performance is better than the next-best-performing method in a statistically significant way ($p$-value < 0.005 according to a Wilcoxon pair-wise rank test). Small-font numbers to the right indicate the bounds of the 95\% confidence interval, derived via a bootstrap approach.}
\label{tab:main_table}
\small
\begin{tabular}{l|ccc|ccc|ccc}
\toprule
 & \multicolumn{3}{c|}{FOLIO}
 & \multicolumn{3}{c|}{CLUTRR}
 & \multicolumn{3}{c}{PQA} \\
\hline
 & \rotatebox{90}{M7B} & \rotatebox{90}{L8B} & \rotatebox{90}{L70B}
 & \rotatebox{90}{M7B} & \rotatebox{90}{L8B} & \rotatebox{90}{L70B}
 & \rotatebox{90}{M7B} & \rotatebox{90}{L8B} & \rotatebox{90}{L70B} \\
\hline
SC20
 & \confcell{66\%}{66.4}{65.5} & \confcell{71\%}{71.7}{70.1} & \confcell{77\%}{77.7}{75.9}
 & \confcell{59\%}{59.3}{58.8} & \confcell{69\%}{69.5}{68.8} & \confcell{69\%}{69.4}{68.8}
 & \confcell{97\%}{97.2}{95.6} & \confcell{95\%}{95.6}{94.4} & \confcell{93\%}{94.1}{92.4} \\
COT
 & \confcell{66\%}{66.4}{65.5} & \confcell{68\%}{69.1}{67.2} & \confcell{72\%}{72.5}{71.8}
 & \confcell{59\%}{59.3}{58.8} & \confcell{68\%}{68.4}{67.8} & \confcell{66\%}{66.3}{65.6}
 & \confcell{82\%}{82.9}{81.7} & \confcell{90\%}{91.2}{89.6} & \confcell{93\%}{94.1}{92.4} \\
SAT-LM
 & \confcell{43\%}{43.2}{42.8} & \confcell{43\%}{43.2}{42.8} & \confcell{43\%}{43.2}{42.8}
 & \confcell{50\%}{50.4}{49.9} & \confcell{50\%}{50.4}{49.9} & \confcell{50\%}{50.4}{49.9}
 & \confcell{50\%}{50.3}{49.8} & \confcell{50\%}{50.3}{49.8} & \confcell{50\%}{50.3}{49.8} \\
LoT-20
 & \confcell{57\%}{57.3}{56.6} & \confcell{69\%}{69.5}{68.7} & \confcell{70\%}{70.4}{69.5}
 & \confcell{71\%}{71.6}{70.7} & \confcell{70\%}{70.2}{69.7} & \confcell{69\%}{69.3}{68.7}
 & \confcell{88\%}{88.4}{87.5} & \confcell{97\%}{98.2}{96.3} & \confcell{95\%}{95.7}{94.3} \\
LLM-Tres
 & \confcell{66\%}{66.6}{65.9} & \confcell{63\%}{63.2}{62.4} & \confcell{63\%}{63.2}{62.4}
 & \confcell{51\%}{51.5}{50.8} & \confcell{51\%}{51.6}{50.8} & \confcell{53\%}{53.2}{52.8}
 & \confcell{80\%}{81.4}{79.2} & \confcell{83\%}{83.8}{82.3} & \confcell{76\%}{76.6}{75.2} \\
\textbf{ARGOS}
 & \textbf{\confcell{70\%}{70.6}{69.8}} & \textbf{\confcell{81\%}{81.8}{80.0}} & \textbf{\confcell{80\%}{80.5}{78.8}}
 & \textbf{\confcell{78\%}{78.4}{77.7}} & \textbf{\confcell{76\%}{76.3}{75.8}} & \textbf{\confcell{78\%}{78.2}{77.7}}
 & \textbf{\confcell{98\%}{98.7}{97.9}} & \confcell{97\%}{98.2}{96.3} & \textbf{\confcell{97\%}{98.1}{96.2}} \\
\midrule
 & \multicolumn{3}{c|}{CosmosQA}
 & \multicolumn{3}{c|}{ESNLI}
 & \multicolumn{3}{c}{QUAIL} \\
\midrule
 & \rotatebox{90}{M7B} & \rotatebox{90}{L8B} & \rotatebox{90}{L70B}
 & \rotatebox{90}{M7B} & \rotatebox{90}{L8B} & \rotatebox{90}{L70B}
 & \rotatebox{90}{M7B} & \rotatebox{90}{L8B} & \rotatebox{90}{L70B} \\
\hline
SC20   & \confcell{84\%}{84.3}{82.9}& \confcell{81\%}{81.3}{79.7}
       & \confcell{90\%}{91.1}{89.9}& \textbf{\confcell{97\%}{97.7}{97.1}} & \textbf{\confcell{96\%}{97.0}{96.3}} & \textbf{\confcell{99\%}{99.5}{99.2}}
       & \confcell{70\%}{71.0}{67.9} & \confcell{68\%}{69.1}{65.6} & \confcell{75\%}{75.6}{72.4} \\
COT    & \confcell{81\%}{81.4}{80.1} & \confcell{76\%}{77.5}{74.2
} & \confcell{88\%}{88.7}{86.9}
       & \confcell{96\%}{95.8}{96.4} & \confcell{88\%}{86.9}{88.4} & \confcell{99\%}{98.9}{99.4}
       & \confcell{71\%}{71.9}{68.8} & \confcell{65\%}{66.2}{63.6} & \confcell{75\%}{75.6}{72.4} \\
SAT-LM & \confcell{35\%}{37.5}{34.8}     & \confcell{35\%}{37.5}{34.8}      & \confcell{35\%}{37.5}{34.8}  

       & \confcell{49\%}{50.0}{47.7} & \confcell{49\%}{50.0}{47.7} & \confcell{49\%}{50.0}{47.7}
       & \confcell{53\%}{55.0}{51.6} & \confcell{53\%}{55.0}{51.6} & \confcell{53\%}{55.0}{51.6} \\
LoT-20 & \confcell{77\%}{77.2}{76.8} & \confcell{75\%}{75.9}{74.2}     & \confcell{85\%}{85.7}{84.3}     
       & \confcell{71\%}{72.1}{70.4} & \confcell{76\%}{77.5}{75.7} & \confcell{75\%}{76.1}{74.4}    
       & \confcell{62\%}{63.6}{59.9} & \confcell{56\%}{57.1}{53.8} & \confcell{72\%}{73.1}{69.9} \\
LLM-Tres & \confcell{73\%}{72.1}{74.0}   & \confcell{72\%}{72.7}{70.9}     & \confcell{71\%}{71.5}{69.8}
       & \confcell{51\%}{52.5}{50.8}& \confcell{51\%}{52.5}{50.8}& \confcell{51\%}{52.5}{50.8}
       & \confcell{63\%}{65.9}{62.8} & \confcell{60\%}{62.1}{58.8} & \confcell{58\%}{60.1}{57.1} \\
ARGOS  & \confcell{84\%}{84.3}{82.7}& \textbf{\confcell{83\%}{84.0}{82.6}}& \confcell{90\%}{90.7}{89.4}
       & \confcell{95\%}{95.6}{94.9} & \confcell{96\%}{96.2}{95.5} & \confcell{98\%}{98.0}{97.4}
       & \textbf{\confcell{82\%}{83.4}{80.6}} & \textbf{\confcell{82\%}{83.4}{80.7}} & \textbf{\confcell{80\%}{81.8}{78.5}} \\ \bottomrule
\end{tabular}
\end{center}

\end{table}

\paragraph{RQ1: How useful are the scoring and backbone-tracking elements?}
In Table \ref{tab:ablation}, we test the importance of two elements of ARGOS: (i) score thresholding and (ii) backbone computation. The ablation of each element in isolation results in a decrease in performance. In addition, the ablation of both results in a larger performance drop than even the sum of the two single ablations' decreases. The fully-ablated method, however, still shows strong performance relative to the next strongest baseline (SC-20), highlighting the strength of the general concept behind the method. For further ablations, see Appendix \ref{app:ablations}.

\paragraph{\orange{RQ2: How often are ARGOS' added clauses useful or harmful?}}
An important criterion when adding clauses is that they do not corrupt the logic of the problem, undesirably changing the outcome of the logic. It can be shown (see Appendix 
\ref{app:well-foundedness}) that so long as clauses are commonsensical, their addition will not corrupt the problem. However, it is possible that our method adds non-commonsensical clauses, since the commonsense scoring is not perfectly reliable.  Given CLUTRR's strict structure, since we know the full knowledge base from which it was constructed, we can re-construct the full problems and test if ARGOS' added clauses corrupt the logical arithmetic such that a different answer is found for the logic problem. We find that on CLUTRR, ARGOS \emph{never} corrupts a problem. It is then not surprising that ARGOS sees significant performance gains: added information should in principle never negatively affect a wholly rational reasoner's solution and so performance should only improve. \orange{It is also, of course, important that the added clauses contribute to the (correct) solution of the problem. In order to identify what information is important to the solution of the problem, we add the full relational reasoning rules to the SAT problem representing each CLUTRR example. We then extract the proof, taking all variables mentioned in the proof as important to solving the problem. We can then measure the number of problems for which at least one new variable is added, which is important to the proof. We find that ARGOS, on Llama 8B, identifies important new variables for 65\% of the CLUTRR questions we test on.}

\paragraph{RQ3: Does ARGOS attribute more compute to harder problems? How does this affect the solution of harder problems?}
In Figure \ref{fig:fliphardness} (b), we examine the proportion of CLUTRR problems that are solved correctly by SC and ARGOS, over subsets of the dataset grouped by the number of ARGO iterations before termination. The error bars are 5/95\% confidence intervals. As the number of ARGOS iterations increases, the problems become harder for SC to solve (indicated by a lower proportion of correct solutions by SC). This tells us that ARGOS' method of evaluating solvability is working as intended; harder problems are being assigned more computation. Another interpretation of this result is that problems which have more missing information, or for which the missing information is more difficult to infer, are attributed more ARGOS iterations (in order for ARGOS to find the necessary information). This is supported by the fact that the decrease in proportion seen in SC is not present for ARGOS: if SC's performance is dropping due to missing information, then ARGOS is successfully recovering the necessary missing information. 

This ability to address the obstacles which cause SC performance to drop contribute to a large number of answers being flipped from incorrect (when solved by self-consistency) to correct (when solved by ARGOS). Changes to the answers caused by new information are more often than not in the right direction. On CLUTRR L70B, we find 112 correct and 35 incorrect flips. Figure \ref{fig:fliphardness} (a) shows the number of correct and incorrect flips ARGOS achieves. As the number of ARGOS iterations increases, both the correct and incorrect flip counts increase, but the correct flip counts increase much faster. For a closer look at confidence-score vs. iteration behavior, see Appendix \ref{app:fluct}.

\begin{figure}[t]
\begin{minipage}{0.38\textwidth}
    \centering 
    \small
    \renewcommand{\arraystretch}{0.1}
    \captionof{table}{Ablations. We ablate elements of ARGOS: (i) the score thresholding, taking the first clause sampled at each iteration (ARGOS - No T), (ii) the backbone-tracking, generating prompts by randomly selecting two variables (ARGOS - No BB). }
    \label{tab:ablation}
    \begin{tabular}{cc}
         & FOLIO L8B \\[2pt] \toprule
        SC-20 & \confcell{71\%}{71.7}{70.1}\\ \midrule  
        ARGOS - No T& \confcell{79\%}{79.8}{78.5}\\  \midrule 
        ARGOS - No BB &\confcell{79\%}{79.4}{78.4}\\ \midrule
        ARGOS - No Both & \confcell{76\%}{75.2\%}{77.2\%} \\ \midrule
        ARGOS & \textbf{\confcell{81\%}{81.8}{80.0}}\\  \bottomrule
    \end{tabular}
\end{minipage}
\hfill
\begin{minipage}{0.58\textwidth}
    \centering\includegraphics[clip, trim=0 0.5cm 0 0.5cm, width=0.8\textwidth]{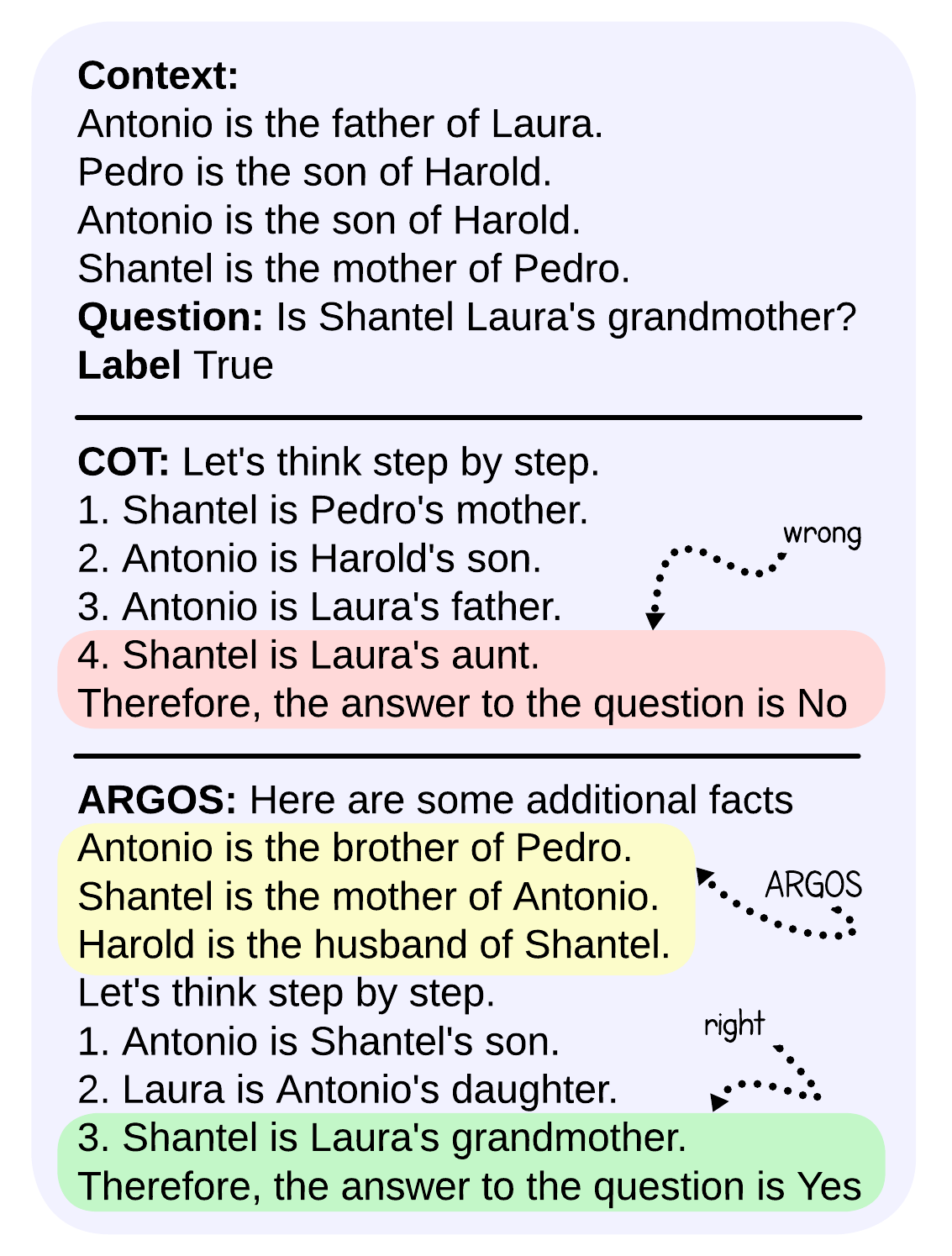}
    \captionof{figure}{COT vs ARGOS on a CLUTRR problem. }
    \label{fig:35}
\end{minipage}
\vspace{-10pt}
\end{figure}
    
Figure \ref{fig:35} provides an example of a question from CLUTRR that is misclassified by self-consistency but flipped to correct by ARGOS. The COT seems confused, displaying its characteristic inability to plan out a proof: in steps 1-3 it provides disjoint pieces of information that neither follow from each other nor move towards the target conclusion. 
This confusion eventually leads to an incorrect step: ``Shantel is Laura's aunt'', resulting to an incorrect conclusion. ARGOS, after 3 iterations, provides several pieces of key information which would require at least one additional reasoning step to find, halving the necessary chain-length. For some examples in which ARGOS fails, see Appendix \ref{app:confusing}. \orange{For results, evaluated by a human, on ARGOS and COT faithfulness on FOLIO, see Appendix \ref{app:manual}.}

\subsection{Impact of Imperfect Logical Translation}
\orange{Here, we test if the assumption of perfect logical translation we make in our experimental procedure is justified via empirical result}. In our experiments, we assumed that we started from a propositional logic formulation. Some datasets came with an official formulation, while for the others we translated from text using Claude Opus 4, filtering to remove failed translations. This was done to fairly evaluate the methods on abductive reasoning, regardless of the quality of translation. In general, logical translation is \orange{kept as a separate module to the proof planning/execution module in logical reasoning systems. Also, the translation task is } intrinsically simpler for LLMs, since it is linguistic rather than cognitive. LLMs have already demonstrated strong abilities at logic translation \citep{yang2024harnessing}, and are expected to continue improving faster than at reasoning. To validate this claim in our context, we re-tested ARGOS with Llama 8B on FOLIO  using a translator, but including failed translations. Performance only decreased marginally, from 80\% to 78\%, still outperforming the next best method (SC at 71\%).  \orange{On QUAIL, ARGOS performance dropped from 82\% to 73\%, which while large relative to the drop on FOLIO still keeps ARGOS as the best performing method on QUAIL.}

\section{Conclusion}

We have presented a method for addressing realistic natural-language logic problems, where ``realistic'' entails a need for abduction and commonsense. Whether neural or symbolic, we demonstrate empirically that existing methods struggle in this setting. The method we present addresses this weakness by \textit{(a) balancing neural and symbolic elements and allowing them to speak to each-other}; and \textit{(b) avoiding the commonplace design choice of heavily restricting the abduction-clause search space}. On both general and highly structured logic problems, our method demonstrates the power of a balanced neuro-symbolic approach, outperforming all existing work meaningfully.\vspace{0.2cm}\\
{\bf Limitations}
A limitation of our work is that it is currently restricted to problems which are strictly True or False, eliminating cases where logic might be used to select an option from a list of choices, or cases where the correct answer is ``Maybe''. In our experimental work, we addressed the consequences this had on dataset selection by converting datasets to be True/False. The method could however be extended to multiple-choice questions by asking each question as an individual True/False question, combined with a decision heuristic for when no/multiple choices are determined True. Another limitation is that we restrict ARGOS to generating rules with up to two literals in the antecedent. \orange{While many-literal propositional formulas can often be decomposed into smaller ones, this may not always be the case and an ideal method would allow for large-antecedent generation. Thirdly, while our goal was to develop methods for open-source LLMs, the method would be more easily applicable if it did not require logit-level access. A potential future direction might be to convert the scoring system from a logit-based one to a verbalized score. Additionally, most benchmarks employed were modified to our setting, making the evaluated tasks at times artificial. Also, ARGOS sometimes depends upon self-consistency for problem solution. So, there are times when unfaithful or hallucinatory COTs will impact ARGOSs' final prediction. Finally, this work focuses on forward chaining. A future direction may be backwards-chaining approaches to abductive reasoning.  }

\FloatBarrier
\WFclear
\pagebreak 
\section*{Reproducibility Statement} 
In the supplementary material, we provide our full code which was used to implement and benchmark our method as well as the baselines. The code also includes data processing steps. We took great care to include in the Appendix, as well, detailed descriptions of our algorithm and our prompts. While our human modification of FOLIO text is not provided, the process for generating it is described carefully in the Appendix. 
\bibliography{references}

\bibliographystyle{ACM-Reference-Format}

\newpage
\appendix

\section{Abductive logic problems are well-defined}
\label{app:well-foundedness}

In this section we prove that the solution of an abductive propositional logic problem, given in Section \ref{sec:background}, does not depend on the choice of commonsense set $\mathcal{C}$.

\orange{\begin{proposition}
Let $\mathcal{P}$ be a set of premises, $Q$ a query proposition, and $\mathcal{C}_1, \mathcal{C}_2$ subsets from commonsense set $\mathcal C$ of additional propositions such that $(\mathcal{P}\land\mathcal{C}_1)\vdash L_1$ and $(\mathcal{P}\land\mathcal{C}_2)\vdash L_2$ for literals $L_1,L_2\in\{Q,\neg Q\}$. 
If $\mathcal P $ is consistent with $\mathcal C$ ($\mathcal P \land \mathcal{C} \not\vdash \bot$) then $L_1\leftrightarrow L_2$.
\end{proposition}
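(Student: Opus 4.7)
The plan is to exploit monotonicity of classical entailment together with the consistency assumption $\mathcal{P}\land\mathcal{C}\not\vdash\bot$. The statement essentially says that no two \emph{subsets} of the shared commonsense pool can push the query in opposite directions, so the conclusion is intrinsic to $\mathcal{P}$ and $\mathcal{C}$ rather than to the particular augmentation chosen.

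First I would lift the two hypotheses from $\mathcal{C}_1$ and $\mathcal{C}_2$ to the full set $\mathcal{C}$. Because classical propositional entailment is monotonic and $\mathcal{C}_1,\mathcal{C}_2\subseteq\mathcal{C}$, from $(\mathcal{P}\land\mathcal{C}_1)\vdash L_1$ and $(\mathcal{P}\land\mathcal{C}_2)\vdash L_2$ I immediately obtain $(\mathcal{P}\land\mathcal{C})\vdash L_1$ and $(\mathcal{P}\land\mathcal{C})\vdash L_2$. This brings both conclusions into the same deductive context.

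Next, I would argue by cases on $L_1,L_2\in\{Q,\neg Q\}$. If $L_1=L_2$, then $L_1\leftrightarrow L_2$ is a tautology and we are done. Otherwise, without loss of generality $L_1=Q$ and $L_2=\neg Q$, so $\mathcal{P}\land\mathcal{C}$ would entail both $Q$ and $\neg Q$, and therefore by conjunction introduction and the definition of negation $\mathcal{P}\land\mathcal{C}\vdash\bot$. This contradicts the consistency hypothesis $\mathcal{P}\land\mathcal{C}\not\vdash\bot$, ruling out this case. Hence $L_1=L_2$ and a fortiori $L_1\leftrightarrow L_2$.

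There is no real obstacle here: the proof is essentially a one-liner once the assumption $\mathcal{P}\land\mathcal{C}\not\vdash\bot$ is invoked. The only thing worth being careful about is to state explicitly that monotonicity is what allows us to pass from the two potentially different augmentations $\mathcal{C}_1,\mathcal{C}_2$ to a single common context $\mathcal{C}$, since it is this step that turns the ``independence of the choice of $\mathcal{C}$'' into an ordinary consistency argument. I would also remark, for the reader, that the consistency assumption is necessary: without it, contradictory premises would trivially entail both $Q$ and $\neg Q$ and the abductive problem would be genuinely ambiguous.
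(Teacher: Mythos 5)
Your proof is correct and follows essentially the same route as the paper's: both derive a contradiction with $\mathcal{P}\land\mathcal{C}\not\vdash\bot$ by bringing the two entailments into a common context (the paper combines $\mathcal{C}_1\land\mathcal{C}_2$ and then weakens to $\mathcal{C}$; you invoke monotonicity to pass to $\mathcal{C}$ directly, which is the same idea stated slightly more cleanly). No gap to report.
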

\begin{proof} 
Let's say $L_1\not \leftrightarrow L_2$: without loss of generality we can take $(\mathcal{P}\land\mathcal{C}_1)\vdash Q$ and $(\mathcal{P}\land\mathcal{C}_2)\vdash \neg Q$.
So, $(\mathcal P \land C_1 \land C_2)\vdash (Q \land \neg Q )\vdash \bot$. But $C_1,C_2\subset C$, so therefore $(\mathcal P \land C) \leftrightarrow (\mathcal P \land C_1 \land C_2\land [C\setminus C_1\cup C_2])$, so $\mathcal P \land C \leftrightarrow \bot \land [C\setminus(C_1 \cup C_2)]$, so $\mathcal P \land C\vdash \bot$. This contradicts our assumption that $\mathcal P \land \mathcal{C} \not\vdash \bot$.
\end{proof}}


\section{Cost Discussion}
\label{app:cost}
While in theory COT generation is meant to be done until an answer is found, in practice it is necessary that an upper-limit on number of tokens generated is enforced. This is in case (a) the LLM continues generating past its answer, or (b) the LLM goes off-track and never answers the question. In any case, this means that each COT generation will be, at worst-case-assumption, equal in cost. In addition, the various method-specific LLM generations that are employed require small token-limits relative to COT, and so the number of COT calls made dominates the total number of tokens generated by any method. \orange{Additionally, as problems get harder and more logically complex, necessary COT generation length increases, making this even more true. So, we can say that cost for each method scales in proportion to the number of COTs generated. For example, Self-Consistency takes two hours longer to run than AROGS on FOLIO with Llama 8B, despite requiring more total LLM calls (where we include scoring and literal-generaiton calls in our count). This is because the average number of COT-specific calls is lower for ARGOS than SC, and the scoring and literal generation calls are much shorter than COT calls. In our implementation, we generate at most 25 tokens for literal-generation, 1 token maximum for scoring, and 300 tokens maximum for COT generation.}
Given this, budgeting method costs in terms of COT calls is well-justified. For SC and COT, the cost evaluation is trivial: COT always makes 1 COT call and SC makes a fixed number of COT calls, specified as a hyper-parameter. Similarly, LOT makes some small generative calls followed by SC, so its number of COT calls is fixable. LLM-Tres makes no COT calls, and neither does SAT-LM. ARGOS' cost varies according to the entry, but its hyper-parameters (number of COT calls per-iteration and threshold/annealing constants) can be set such that its average number (or worst-case) number of calls is less than a budget. A summary of method cost in terms of COT calls is provided in Table \ref{tab:cost}. \orange{In Figure \ref{fig:costs}, we show a histogram of the individual problem costs incurred by ARGOS with Llama 8B, expressed in terms of the number of COTs required per problem. For most problems, only 10 COTs are required. This allows ARGOS to exceed the on-average cap for problems requiring more ablation or deeper search. If we compose a new dataset of only the hardest problems, for example the CLUTRR problems for which ARGOS takes 8-10 ARGOS steps (the final bar in Figure 5(b)). Testing ARGOS, limited at 20 COTs per-problem (strictly, not on average), we see that its performance drops from 65\% to 54\% on these problems, whereas SC performs at 40\$ with these problems. This indicates a clear and steep cost-performance tradeoff for ARGOS, but even with strict cost limitations ARGOS outperforms SC.}

\begin{table}[h]
    \centering
    \caption{Average number of COT calls required by each method.}
    \label{tab:cost}
    \begin{tabular}{c|c}
         & Cost (Avg \# COT) \\ \midrule
         COT & 1 \\
         SC & 20 \\ 
         LOT & 20 \\ 
         SAT-LM & 0 \\ 
         LLM-Tres & 0 \\
         ARGOS & 18.4 \\ \bottomrule
    \end{tabular}
    
\end{table}
\begin{figure}
    \centering
    \includegraphics[width=0.5\linewidth]{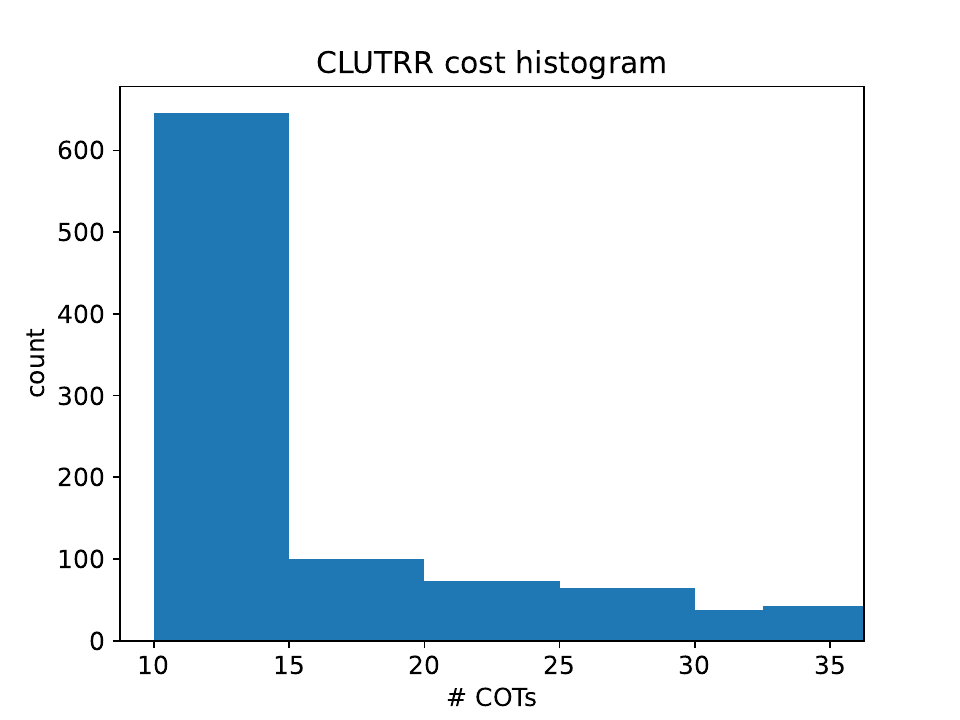}
    \caption{\orange{Histogram of individual problem costs to ARGOS-Llama 8B on CLUTRR}}
    \label{fig:costs}
\end{figure}
\newpage
\section{Manual Human Faithfulness Evaluation }
\label{app:manual}
\orange{For COT, we manually checked the chains which result in the correct final answer, generated with Llama 8B. For ARGOS, if ARGOS made the correct final prediction using the SC output, we will check one of the chains from the final SC call. If ARGOS made the final prediction using the symbolic solver, the reasoning is considered correct if none of the added clauses are incorrect (the reasoning itself must be beyond reproach as it is executed by a solver). Additionally, for ARGOS, we evaluate the usefulness of the added information in solving the problem, and whether the added information is new to the problem. We find that Llama 8B generates faithful COT reasoning processes when it gets the answer right 72\% of the time. We find that ARGOS-L8B generates a faithful reasoning process when it gets the answer right 85\% of the time, showing that in general ARGOS-L8B is more faithful than pure-COT based methods (i.e. COT, SC). Three potential explanations for this result, in order from least to most in terms of their strength in justifying our method, are that (1) The logical content of ARGOS's augmented prompt, regardless of content, incites a more logical structure in the LLM generation. (2) ARGOS successfully extracts the key elements of the problem, stabilizing the LLM and making it less likely to become unfaithful or to hallucinate. (3) ARGOS adds new information which allows us to solve otherwise difficult problems. Empirically, we find that ARGOS-L8B adds at least one piece of information necessary for the generating the faithful proof 85\% of the time. This seems to lend credibility to explanation (2). Additionally, ARGOS adds new information to the text-problem 72\% of the time, which lends some credibility to explanation (3), which would strongly justify the extend of our method, whose goal is explicitly stated as searching for new and crucial commonsense. Most interestingly, we find that 100\% of the time in which we solve symbolically, the information added by ARGOS is faithful, useful and novel. This is perhaps not surprising, since to achieve the contrary, we would have to either add exactly contradictory information to the true proof. Nonetheless, it is an extremely satisfying finding, as it shows that when we are able to avoid typical symbolic robustness issues such as symbol mismatch, ARGOS maintains the rigour characteristic of symbolic methods.}
\section{Algorithm Description}
\label{app:algo}

In this section we provide a detailed description of our ARGOS algorithm. The main procedure is summarized as Algorithm \ref{alg:argos}, which uses the \texttt{find\_new\_commonsense} subroutine in Algorithm \ref{alg:commonsense_search}.

\begin{algorithm}[h]
\caption{ARGOS}
\label{alg:argos}
\begin{algorithmic}[1]
    \Require premises $\mathcal{P}$, query $Q$, SC sample-count $k$, scoring threshold $\tau\in(0,1]$, 
    \Statex self-consistency threshold $\gamma\in(0,1]$ and decay $\alpha\in(0,1]$
    \State commonsense set $\mathcal{C}\leftarrow \{\}$
    \While {$\gamma>0$}
        \State // \textit{Attempt solving with the SAT solver}
        \State sat\_conclusion, backbone ${\mathcal{B}}$ $\leftarrow \texttt{sat\_solve}(\mathcal{P}\land \mathcal{C}\vdash Q,\neg Q)$
        \If{sat\_conclusion is $(\mathcal{P}\land \mathcal{C})\vdash Q$ or $(\mathcal{P}\land \mathcal{C})\vdash \neg Q$}
            \State \Return{$\mathcal{C}$, sat\_conclusion}
        \EndIf
        
        \State // \textit{Else attempt solving with the LLM ($k$-shot self-consistency)}
        \State llm\_conclusion, llm\_confidence $\leftarrow \texttt{llm\_solve}(\mathcal{P}\land \mathcal{C}\vdash Q,\neg Q)$
        \If{llm\_confidence $> \gamma$}
            \State \Return{$\mathcal{C}$, llm\_conclusion}
        \EndIf

        \State // \textit{Else find a new commonsense proposition to add to the pool}
        \State $C\leftarrow\texttt{find\_new\_commonsense}(\mathcal{P},\mathcal{C},\mathcal{B}, \tau)$
        \If{$C$ is not None}
            \State // \textit{New commonsense has been found, we try again with an enlarged }$\mathcal{C}$\textit{ and smaller }$\gamma$
            \State $\mathcal{C} \leftarrow \mathcal{C}\land\{C\}$
            \State $\gamma \gets \gamma - \alpha$
        \Else
            \State // \textit{We failed, return best guess}
            \State\Return{$\mathcal{C}$, llm\_conclusion}
        \EndIf
    \EndWhile
    \State // \textit{We ran out of time, return best guess}
    \State\Return{$\mathcal{C}$, llm\_conclusion}
\end{algorithmic}
\end{algorithm}

\begin{algorithm}[h]
\caption{\texttt{find\_new\_commonsense}}
\label{alg:commonsense_search}
\begin{algorithmic}[1]
    \Require premises $\mathcal{P}$, commonsense $\mathcal{C}$, backbone $\mathcal{B}=\text{backbone}(\mathcal{P}\land\mathcal{C})$, scoring threshold $\tau\in(0,1]$
        \For{$L_1\in\mathcal{B}$ from highest to lowest $\text{\orange{score}}_{\mathcal{B}}(L_1)$}
            \For{$L_2\in\mathcal{B}$ from highest to lowest $\text{\orange{score}}_{\mathcal{B}}(L_2)$}
                \For{$L_\text{right}$ \textbf{in} $\texttt{llm\_generate}_{\mathcal{P}\land \mathcal{C}}
            (L_1\wedge L_2\rightarrow?)$}
                \State commonsense\_score $\leftarrow\texttt{llm\_commonsense\_score}(L_1\wedge  L_2\rightarrow L_\text{right})$
                \State relevance\_score $\leftarrow\texttt{llm\_relevance\_score}_{\mathcal{P}\land\mathcal{C}}(L_1\wedge  L_2\rightarrow L_\text{right})$
                \If{commonsense\_score$\;> \tau$\textbf{ and }relevance\_score$\;> \tau$}
                        \State // \textit{We found a new relevant commonsense clause}
                        \State\Return$L_1\wedge  L_2\rightarrow L_\text{right}$
                    \EndIf
                \EndFor
            \EndFor
        \EndFor
    \State // \textit{We failed to find a new relevant commonsense clause}
    \State\Return{None}
\end{algorithmic}
\end{algorithm}

\orange{
\section{Ablations}
\label{app:ablations}
\begin{table}[H]
    \centering
    \begin{tabular}{ccc}
         &   FOLIO 8B&CLUTRR 8B \\ \midrule
         ARGOS-Symbolic& 
     59\%&72\%\\
 ARGOS& 81\%&76\%\\
 SC& 71\%&69\%\\
 LLM-Tres& 63\%&51\%\\\end{tabular}
    \caption{Ablating the SC-solver on ARGOS. ARGOS-Symbolic denotes the ablated version of ARGOS.}
    \label{tab:sc-ablation}
\end{table}
In Table \ref{tab:sc-ablation}, we ablate the SC-solver, solving problems with only a symbolic solver. We impose a threshold of 100 proposed rules, since the previous bounding system was in   terms of SC confidence.We find that removing the SC option only somewhat hurts ARGOS performance on CLUTRR, but has a catastrophic effect on FOLIO. This is not at all surprising, since CLUTRR is far simpler and more logically structured than FOLIO. Interestingly, we find that LLM-Tres performs slightly better than ARGOS-Symbolic on FOLIO. On investigation, we found that propositions with a single-literal antecedent were generally enough to solve FOLIO problems, and so LLM-Tres' methodological restriction to such rules benefits it, whereas in CLUTRR where nearly all necessary propositions have two literals in the antecedent, LLM-Tres is reduced to guessing. This massive performance gap which we see on ARGOS in the more linguistic and more logical datasets illustrates the need for balanced, neuro-symbolic approaches.
\begin{table}[H]
    \centering
    \begin{tabular}{c|c}
         &  FOLIO 8B\\
         No Commonsense& 
    79\%\\
 No Contextual&80\%\\
 Full AROGS&81\%\\\end{tabular}
    \caption{Ablating individual score thresholds}
    \label{tab:thresh-ablation}
\end{table}
In Table \ref{tab:thresh-ablation}, we ablate separately the commonsense and contextualness scoring components. We find that both are necessary. This makes sense, since while they may have some overlap in terms of the rules which they serve to reject (for example, gibberish output would be neither commonsensical nor contextually relevant), it is easily conceivable that the sets of proposed propositions which they reject correctly is not totally overlapping. }

\section{Algorithm LLM Details}
\label{sec:algorithm-details}

In this section we provide details about the parts of the algorithm that involve interactions with the large language model.

\subsection{Solving with 5-round self-consistency (\texttt{llm\_solve})}
\label{sec:sc5-solve}

This subroutine aims to establish whether the premises and commonsense entails the query, i.e. $(\mathcal{P}\land\mathcal{C})\vdash Q$ or $(\mathcal{P}\land\mathcal{C})\vdash \neg Q$. This routine always returns an answer, but can make mistakes. We few-shot prompt the LLM 5 times with the prompt in Table \ref{tab:cot-prompt}. 

\begin{table}[h!]
\caption{COT prompt}
\label{tab:cot-prompt}
\ttfamily
\frenchspacing 
\begin{tabularx}{\linewidth}{>{\raggedright\arraybackslash}X}
\toprule
Here are some facts and rules: [premises $\mathcal{P}$] \\ 
Here is some additional info we found: [commonsense $\mathcal{C}$] \\
True or false: [query $Q$]?\\
Answer:
\\ \bottomrule
\end{tabularx}
\end{table}

Each call returns an answer $a_1,\dots,a_5\in\{\text{True},\text{False}\}$, with a certain confidence $c_1,\dots,c_5\in(0,1]$. The algorithm returns the most common answer (True or False), and a total confidence score given by the sum of confidences of the most common answer $a^*$, divided by 5:
\begin{align*}
c^* = \frac1{5}\sum_{i=1}^5 c_i\mathbbm{1}\big[a_i=a^*\big].
\end{align*}

\subsection{Generating new commonsense literals $L_\text{right}$ (\texttt{llm\_generate})}
\label{sec:llm-generate}

This subroutine aims to find a plausible right-hand-side literal $L_\text{right}$ for a proposition $L_1\wedge L_2\rightarrow L_\text{right}$. The new literal might potentially involve new variables not previously seen in the problem. We use a slightly different prompt for CLUTRR and for the others, because of CLUTRR's more distinct structure. \orange{Anecdotally, we found that CLUTRR's more consistent linguistic structure allowed for prompt formats which were very straightforward. For more linguistically complex datasets, however, more robust prompt formatting was required. For example, asking if a rule seems contradictory was more robust in situations where a rule was ambiguous without context (i.e. student(Rina) implies like\_coffee(Rina), vs. Mom(Hannah, Sam) and Sibling(Sam,Mary) implies Mom(Hanna, Mary)).}

\begin{table}[h]
\centering
\caption{Clause generation prompt for all datasets but CLUTRR. $e$ is an entity appearing in $L_1$ or $L_2$.}
\label{tab:folio-gen}
\ttfamily
\frenchspacing 
\begin{tabularx}{\linewidth}{>{\raggedright\arraybackslash}X}
\toprule
Fill in the blank with a known predicate: [$L_1\wedge L_2$] implies  \_\_\_([$e$]). 
\\ Known predicates are: [all predicates appearing in the premises $\mathcal{P}$ and the commonsense $\mathcal{C}$]
\\ Answer: 
\\ \bottomrule
\end{tabularx}
\end{table}

\begin{table}[ht]
\centering
\caption{Clause generation prompt for CLUTRR. $e_1$ and $e_2$ are entities appearing in $L_1$ or $L_2$.}
\label{tab:clutrr-context}
\ttfamily
\frenchspacing 
\begin{tabularx}{\linewidth}{>{\raggedright\arraybackslash}X}
\toprule
If [$L_1\wedge L_2$] then \_\_\_([$e_1$],[$e_2$]). Fill in the blank.\\
Answer: 
\\ \bottomrule
\end{tabularx}
\end{table}

Thus, for example, in FOLIO if $L_1=\textit{drinksCoffee}(Rina)$ and $L_2=Loves(Mary, Sam)$, then we would make three calls, one for $\textit{drinksCoffee}(Rina)\wedge Loves(Mary, Sam) \rightarrow F(Rina)$, $\textit{drinksCoffee}(Rina)\wedge Loves(Mary, Sam) \rightarrow F(Mary)$ and $\textit{drinksCoffee}(Rina)\wedge Loves(Mary, Sam) \rightarrow F(Sam)$ respectively. With such calls, the method might return the set $\{productive(Rina), hasFeelings(Mary), isLoved(Sam)\}$, for example.

\subsection{Scoring propositions $L_1\wedge L_2\rightarrow L_\text{right}$ for commonsense (\texttt{llm\_commonsense\_score})}
\label{sec:llm-commonsense-score}

This procedure uses the LLM to score how much our new proposition $L_1\wedge L_2\rightarrow L_\text{right}$ is likely to be commonsense. In detail, we ask the LLM whether, without any context, the rule seems contradictory (FOLIO/ProntoQA) or true (CLUTRR). We record the logits of the ``Yes'' and ``No'' tokens following the prompt, and we return as commonsense score $P[\text{No}]=\exp(\text{logit}_\text{No})/(\exp(\text{logit}_\text{Yes})+\exp(\text{logit}_\text{No}))$, except for CLUTRR where we return  $P[\text{Yes}]=\exp(\text{logit}_\text{Yes})/(\exp(\text{logit}_\text{Yes})+\exp(\text{logit}_\text{No}))$ since the question is inverted.

\begin{table}[h!]
\centering
\caption{Commonsense scoring prompt for FOLIO/ProntoQA.}
\label{tab:folio-contra}
\ttfamily
\frenchspacing 
\begin{tabularx}{\linewidth}{>{\raggedright\arraybackslash}X}
\toprule
Does the following rule seem contradictory? \\
Rule:  [$L_1\wedge L_2\rightarrow L_\text{right}$]  \\ 
Answer: 
\\ \bottomrule
\end{tabularx}
\end{table}

\begin{table}[h!]
\centering
\caption{Commonsense scoring prompt for CLUTRR.}
\label{tab:clutrr-contra}
\ttfamily
\frenchspacing 
\begin{tabularx}{\linewidth}{>{\raggedright\arraybackslash}X}
\toprule
Does the following rule seem true? \\
Rule:  [$L_1\wedge L_2\rightarrow L_\text{right}$]  \\ 
Answer: 
\\ \bottomrule
\end{tabularx}
\end{table}

\subsection{Scoring propositions $L_1\wedge L_2\rightarrow L_\text{right}$ for context-relevance (\texttt{llm\_relevance\_score})}
\label{sec:llm-relevance-score}

This procedure uses the LLM to score how much our new proposition $L_1\wedge L_2\rightarrow L_\text{right}$ is likely to be relevant to the context. This helps eliminate propositions, like ``The sky is blue'', that are true and commonsense but unlikely to help prove our query. In this case, we use the same prompt for all datasets. We record the logits of the tokens ``Yes'' and ``No'' following the text, and return $P[\text{Yes}]=\exp(\text{logit}_\text{Yes})/(\exp(\text{logit}_\text{Yes})+\exp(\text{logit}_\text{No}))$ as relevance score. 

\begin{table}[h!]
\centering
\caption{Context scoring prompt for all datasets.}
\label{tab:context-scoring}
\ttfamily
\frenchspacing 
\begin{tabularx}{\linewidth}{>{\raggedright\arraybackslash}X}
\toprule
Here are some facts and rules: [premises $\mathcal{P}$ and commonsense $\mathcal{C}$] \\ 
Does the following new rule seem contextually relevant to the facts and rules? [$L_1\wedge L_2\rightarrow L_\text{right}$] \\
Answer:
\\ \bottomrule
\end{tabularx}
\end{table}
\subsection{Manual Evaluation of Scoring Module}
\label{app:human-score-eval}
 We have conducted a human review of 50 ARGOS-proposed rules and their corresponding scores, for Llama 8B on FOLIO. We found that, by classifying with the decision rule of thresholding at 0.3 for commonsense-ness and contextual-ness separately, the commonsense binary classification is correct 76\% of the time, and the contextualness classification accuracy is correct 91\% of the time. We find that the scores are not in general well-calibrated, but this is not needed for the thresholding that we do.

\section{In-Depth literature review}
\label{app:lr}
Since 2022, when \citet{cot_wei_2022} found that models had the capacity to solve (at the time) difficult reasoning problems by simply prompting the model to output a detailed rationale (a ``chain of thought'' (COT)) before making a decision, there has been significant interest in leveraging/improving LLMs' capacity for reasoning. 

\paragraph{Semantic Logic Parsers}

More recently, a series of methods were proposed
which effectively by-passed the reasoning process by prompting LLMs to translate the given input to symbolic language (parsing the logic) and then using external, programmatic solvers to solve the problem in the symbolic space.
F-COT, proposed by \citet{faithful_lyu_2023} and SAT-LM, proposed by \citet{satlm_ye_2023} are two contemporaneous works which prompt the LLM to translate the given problem into its corresponding symbolic language, to be solved by the appropriate solver. Logic-LM, proposed by \citet{logic-lm_pan_2023}, also published around the same time, includes a self-refinement step in order to catch mistranslations and to re-translate them, but this module provided only minor improvements. 

The logic-parsing strategy proved extremely effective, converting reasoning tasks into the more linguistic translation task. Since the algorithmic tools never make mistakes, if the translation is accurate, then the solution will be too. These methods, however, rely upon the strong assumption which goes widely unacknowledged that all necessary information is provided at input. By motivation, this assumption implies a well-informed, precise, and careful end-user. This critical assumption greatly injures the applicability and generalizability of these symbolic methods. In order to address this, ARGOS aims specifically to provide missing information by exploring the logical space and leveraging logical tools.
\citet{faith_xu_2024}, argue that using external solvers is not relevant to the LLM's actual reasoning capacity and present a method which still parses the text into symbolic language, but uses the LLM as the symbolic solver by inputting the symbolic expressions directly to the LLM. In fact, this was an alternative presented in SAT-LM, and was shown to be weaker than directly using an external solver in cases where the problems are fully defined (which is the case in the setting chosen by both works). 


\paragraph{Deductive Logical Algorithms}

Given the findings discussed above that LLMs were efficient logic-parsers, the door was opened to more agentic algorithms which operate in the logic-space (as opposed to the textual space). Algorithms were proposed by \citet{LAMBADA} and \citet{symba_lee_2025} which attempted to reason backwards through the problem in logic space, using the LLM after each deduction step to choose the next deduction. These methods proved to perform worse than the previous parsing methods as they relied on the same assumptions of completeness in the input, but also required the LLM to greedily trace a reasoning path, opening another avenue for errors. This step is unnecessary since logic tools are able to search exhaustively over the solution space at very low cost.
\paragraph{Clause-generative Algorithms (Abduction)}

In response to the literature's inability to address cases where the completeness assumption fails, some work was proposed which explicitly aims to generate logical clauses in the textual space. The very similar algorithms proposed by \citet{lot} and \citet{LReasoner} first translate the text to logic and then produce some new clauses which are deducible via classical logic given the logical translation of the input. By then translating the newly deduced logic back to text, the textual representation of the logic is now richer from a linguistic point of view (although no new information was added to the underlying logic), and then COT is used to solve the augmented problem in text-space. In the logical sense, this method is not truly abductive as no new logical information is produced. 

Instead of producing clauses via deduction on the input, \citet{llm-tres} proposed a method which leverages LLMs' knowledge of commonsense to supply missing clauses during the reasoning process.  Given the logical translation of the input text, the space of all 2-variable clauses which are possible to construct using the variables given in the question is explored exhaustively, with each rule being given a probability of being true by the LLM. Reasoning paths are formed with the various rules.

This method suffers from a rigidity regarding the search space: due to the exhaustive search the space must be restricted to only 2-variable rules constructible from the input. Often the missing information from logic problems may include variables which are not named in the problem and so are unseen in the abduction input. In other cases, the necessary rules might be of different sizes. These cases are easily seen even in the commonly used datasets for logic-reasoning evaluation. For example, CLUTRR \citet{clutrr_sinha_2019}, which is a dataset of reasoning problems related to family relationships, requires 3-variable rules of the form $mom(A, B) \land sister(B,C) \rightarrow mom(A,C)$. This algorithm is by construction incapable of producing such information. The evaluation procedures  presented by \citet{llm-tres} use simple and rigidly structured data such that only simple 2-variable rules constructed from existing variables are required. 

Since the algorithm output is dependent on the logical proof, if the necessary clauses cannot be found then a full proof will never be found, and the algorithm is forced to provide either no output or a guess. In contrast to these works, ARGOS aims to introduce logical clauses which provide new information about potentially unseen variables and which can be of varying sizes. In order to accommodate the very large search space, we introduce several innovative methods to dynamically select sub-spaces and to efficiently search them. Our proposed method aims to build its solution from both the language-reasoning space and the logic space in order to leverage the exactness of logical tools while remaining robust to failures to find the necessary logical clauses.

\paragraph{Generalist Reasoners}

Despite much research, Chain-of-Thought (COT), proposed by \cite{cot_wei_2022}, remains one of the most generally applicable and robust reasoning approaches \citet{survey}. Thus, much research has been done on how to augment the COT process (the previously discussed work by \citet{lot} could be viewed as part of this category). While the focus of our paper is logical algorithms, there are some generalist methods which cannot be ignored. A simple but highly effective approach known as self-consistency (SC), proposed by \citet{SC}, prompts via COT several times and takes the mode of the outputs as the prediction. This method benefits from adding further robustness by smoothing potential outlier errors which might be present in a few chains, and is easily scalable according to computational budget by choosing the appropriate number of samples to take. However, the marginal return decreases as the number of sampled chains increases, as is shown in the original paper \citet{SC}.
COT and SC's poor performance in proof planning \citep{prontoqa_saparov_2023} motivated Tree of Thoughts (TOT), proposed by \citet{tot}, in which a reasoning tree is hand-crafted for a problem and then the LLM is prompted to explore this tree for likely paths. This method demands a very specific tree topology and exploration designs for each task and so is more of a general framework than an explicit method applicable to logical reasoning. The tree structure of logical reasoning problems even in the same dataset are highly varied. Given this, TOT is poorly suited to logical reasoning settings. There are many works which go beyond TOT in terms of exploring potential chains. Most recently, \citet{dear_xue_2024} proposed a recursive method which also conducts a tree-like search, but allows for dynamically-structured trees.

\section{Used Datasets} \label{app:used-data}
The following is a description of the datasets which we have used in our experiments.
\paragraph{ProntoQA \citep{prontoqa_saparov_2023}} ProntoQA is a dataset which comes in three types of groundings over the same symbolic structure, which is a string of single-variable modus ponens operations. One of these types is a hand-crafted grounding designed to be true according to commonsense. This dataset is, at this point, fairly easy for language models to handle. However, it remains a common dataset in logical reasoning, and comes with the convenience that a simple random removal of inference rules included in the problem will build abduction cases, as all rules are commonsense. There are 59 problems in the dataset.
\paragraph{CLUTRR \citep{clutrr_sinha_2019}} CLUTRR is a dataset of family-relational reasoning problems in which some family relations (i.e. ``Sam is the mother of John'') are given in the form of simple stories (i.e. ``John went with his mom Sam to the mall''). Traditionally, the task in CLUTRR is to deduce the relationship between two people, given the context. In order to structure the problem as a true/false classical logic problem, however, we restructure the task to determine if a given relationship between two people is true or not. Practically, we construct the labels by taking the ground-truth relationship as the query 50\% of the time (so the new label is ``True''), and taking a random other relationship between the given two people (``False'') the other 50\% of the time, in order to balance the dataset. While the task is naturally abductive in that the input contexts do not include abstract or even grounded relational inference rules (i.e. ``if A is the mother of B then B is the child of A''), most symbolic methods rely upon a practitioner to hand-craft a knowledge base of relational rules which are appended to each problem in the dataset. Simply by forbidding this provision, the task becomes truly abductive for the reasoning model. There are 1000 problems in the dataset.
\paragraph{FOLIO \citep{folio_han_2024}} FOLIO is a dataset of logic problems which were hand-crafted by ``expert annotators''. The dataset is the most diverse of the three we use, both linguistically and structurally. While not perfectly commonsensical, it is generally based in commonsense simply because the annotators were humans who exist in and tend to operate within real-world contexts. Given this pseudo-commonsensicality, random removal of rules to introduce commonsense abduction is not an option, as non-commonsense may be removed from the context. Thus, we engaged human annotators to replace randomly selected phrases from problems with semantically equivalent expressions, so that each replacement would require a minimum of one new rule, indicating that the replaced phrase is implied by its replacement. For example, ``NBA Player'' $\rightarrow$ ``Pro Basketball Player''. Annotators were instructed to reject problems where no replacement could easily be found, and some annotations failed to impact the problem due to either weak replacements or non-interaction with the solve-path of the problem. These problems were discarded, leaving us an abduction-variant of FOLIO with 108 True-or-False problems.

\paragraph{ESNLI \citep{esnli}}
ESNLI is a dataset of premise-conclusion pairs, stemming from the human-explanation NLI field. A machine is asked to explain how the premise yields or contradicts the conclusion. We adopted this dataset by inverting the task, so that given the premise and conclusion the machine must determine whether the conclusion is entailed (True) or contradicted (False). The task naturally ensures that abduction is necessary, as we leave out the human explanation with which the pairs are annotated. The dataset, on inspection, is mostly common-sensical.
\paragraph{CosmosQA \citep{cosmosQA}}
CosmosQA is an MCQA dataset designed to test machine reading comprehension. We adapt it to our setting by constructing problems for which the answer is ``False'' by taking questions for which "None of the Above" is the correct answer and randomly selecting another of the answer choices to be the query (for problems for which the answer is ``True'', the process is obvious). The dataset is mostly commonsensical.

\paragraph{QUAIL \citep{QUAIL}}
QUAIL is an MCQA dataset also designed to test machine reading comprehension. It is derived primarily by scraping wiki and forum pages on online, and so it often contains artifacts such as timestamps or CSS formatting quirks. We adopt it from MCQA to True/False in an identical fashion to CosmosQA. Becasue of its provenance, questions are often extremely vague; the larger context of the webpage from which the problem comes is not included but is often key to answering the problem. On manual solution of QUAIL problems, the intuitive approach is often to start by inferring the original context of the scraping, making the dataset of high value for abductive settings and for robust evaluation of commonsense flexibility. 

\section{Unused Datasets} \label{app:unused-data}
While there are many logical-reasoning-related datasets available, many are unsuitable because they are either not truly logically-structured or are not commonsensical. Here, we will list some of the most commonly used datasets for logic-adjacent applications and explain their weaknesses/unsuitability for our setting. 
\paragraph{LogiQA}
While LogiQA \citep{logiqa_liu_2021} is generally commonsensical and logically themed, its questions do not in fact impose an immediate logical problem. In fact, many of the questions are in fact \textit{meta-logical}, in that they ask questions about the underlying logic of the text. For example: ``Which of the following makes the same logical mistake as above''. These questions could indeed have a formal-logical re-framing, but this would require far more logical aptitude than is currently held by language models, and hand-translating LogiQA questions to logic problems is too time-consumptive.
\paragraph{RECLOR}
RECLOR \citep{reclor_yu_2020} suffers the same weakness as LogiQA. Questions are meta-logical or ask for subjective qualifications regarding some described commonsense logic. Again, this type of question both fails to evaluate true logical reasoning in real-world contexts, and proves problematic for careful evaluation given the inconsistency of the task in that different questions ask different things of the reasoner.
\paragraph{Soft Reasoner}
The Soft Reasoner dataset \citep{softreasoner_clark_2021} is strictly logical, but is plainly non-commonsensical by construction. The logical problems are constructed without consideration of commonsense or real-world contexts. The dataset is constructed by building clauses from variables/predicates which are randomly selected from a hand-selected bag of words.
\paragraph{LogicNLI}
LogicNLI \citep{logicnli_tian_2021} suffer from the Soft Reasoner dataset's weakness to an even greater degree - while the problems are also randomly generated and do not comply with commonsense, they also often do not comply with grammar. For example, phrases such as ``Quinlan does not entire'' appear frequently. While this may suit the authors' aims of producing arbitrary text as a stand-in for symbolic logic, it is not amenable to the evaluation of real-world logical reasoning in human contexts.
\paragraph{ProofWriter}
ProofWriter \citep{proofwriter_tafjord_2021} again suffers from the same weakness which all semi-random auto-generated datasets suffer: non-commonsenseness. By picking clauses randomly by sampling bags of predicates, no guarantees can be made on the realism of the data. Examination of the dataset will show that the ``facts'' described in problem contexts range from unlikely to non-sensical - the very first problem includes a rule ``All red things are rough''. Within the real world this is obviously not true, as we can find examples of red things which are not rough. Of course, it was not the dataset creators' aim to build a commonsensical dataset and so this is of little surprise. 
\section{Confusing Problems for ARGOS}
\label{app:confusing}
Here' we give some examples of problems which confused ARGOS. Note that all of the problems provided are examples in which the given premises are at least somewhat contradictory with commonsense, breaking the setting.

\begin{table}[H]
\centering
\caption{Confusing since Fido is typically specifically a dog's name (and not a cat's). If we ask the LLM if Fido is a dog, with no context, it will say yes.}
\label{tab:folio-contra}
\ttfamily
\frenchspacing 
\begin{tabularx}{\linewidth}{>{\raggedright\arraybackslash}X}
\toprule
All tigers are cats. No cats are dogs. All Bengal tigers are tigers. All huskies are dogs. Fido is either a Bengal tiger or a cat. \\
True or False: Fido is a husky animal.
\\ \bottomrule
\end{tabularx}
\end{table}

\begin{table}[H]
\centering
\caption{Confusing since Detroit City is probably not a horse according to commonsense.}
\label{tab:folio-contra}
\ttfamily
\frenchspacing 
\begin{tabularx}{\linewidth}{>{\raggedright\arraybackslash}X}
\toprule
Detroit City is a horse. Some horses are racehorses. If a horse falls in a race, it poses risks to its rider. Detroit City fell in a race. A horse is a racehorse if it is in a race. \\
True or False: Detroit City has been in multiple races.
\\ \bottomrule
\end{tabularx}
\end{table}

\begin{table}[H]
\centering
\caption{Confusing because (1) edible can refer to beverages when the contextual distinction is between safe and unsafe for consumption, but not when the distinction is between eating and drinking; (2) Coke is not apple juice.}
\label{tab:folio-contra}
\ttfamily
\frenchspacing 
\begin{tabularx}{\linewidth}{>{\raggedright\arraybackslash}X}
\toprule
All drinks on the counter are edible. All juices on the counter are drinks. Orange juice is a type of juice. Everything on the counter is either orange juice or apple juice. All apple juices on the counter are sweet. The coke is on the counter and if the coke is apple juice, then the coke is a drink. If the coke is not apple juice, then the coke is not edible. \\
True or False: The coke is edible and sweet.
\\ \bottomrule
\end{tabularx}
\end{table}

\section{Solvability Progression Illustration}
\label{app:fluct}
In this section we provide figures for CLUTRR, CosmosQA and QUAIL, illustrating how ARGOS' SC-solvability measure progresses per-question, over a 100-question clipping from each dataset. Positive \% values indicate a positive classification, and negative the like. 

\begin{figure}

    \centering
    \includegraphics[width=1\linewidth]{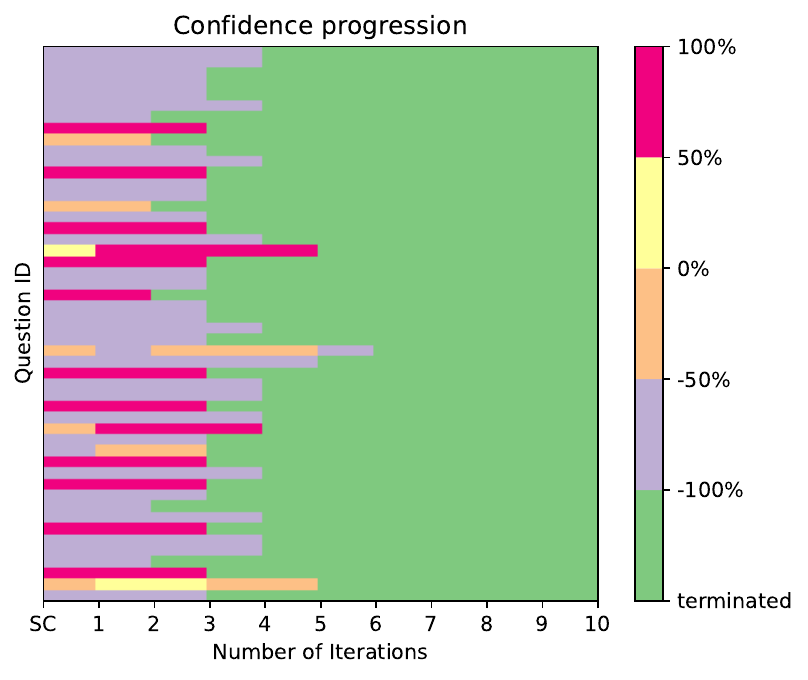}
    \caption{CosmosQA: This dataset, being mostly solved by vanilla SC, sees little fluctuation and exits from ARGOS early. With that said, we still see a flip from low-confidence negative to high-confidence positive.}
    
    \end{figure}
    \begin{figure}

    \centering
    \includegraphics[width=1\linewidth]{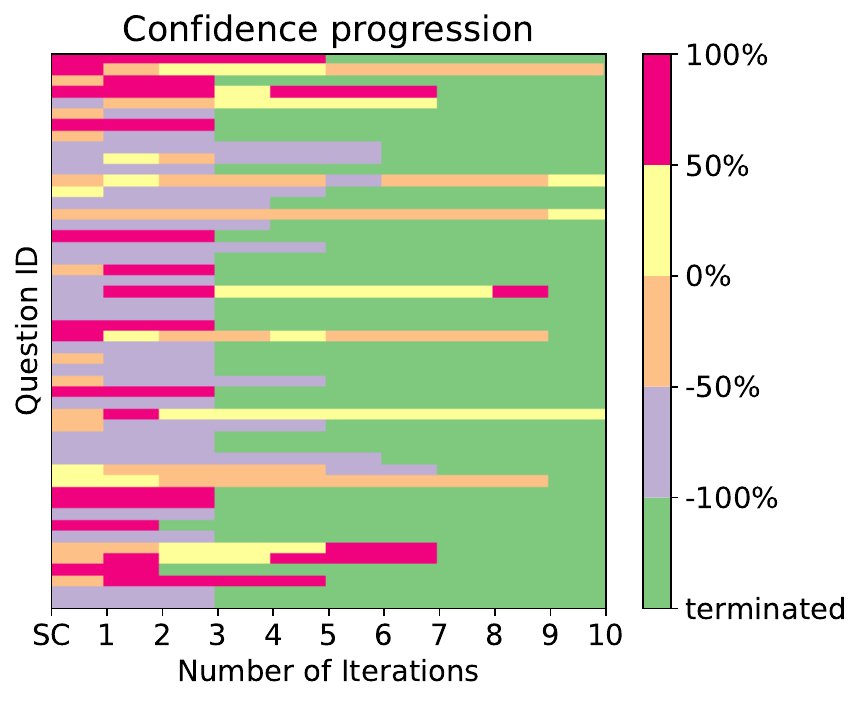}
    \caption{CLUTRR: We see a significant amount of confidence fluctuation and flipping, indicating that meaningful elements within the logic of the problem are being modified by ARGOS in order to affect the answer. This is not surprising, since our construction of generated propositions as taking purely backbone variables as antecedents ensures that added ARGOS propositions will be effectual.}
    \label{fig:placeholder}
    \end{figure}
    \begin{figure}

    \centering
    \includegraphics[width=1\linewidth]{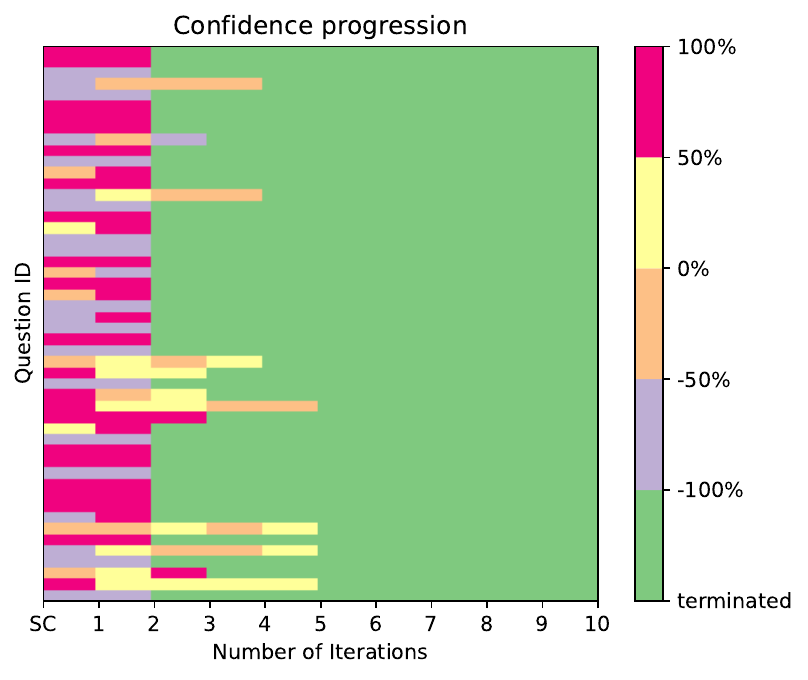}
    \caption{QUAIL: despite ARGOS's strong performance on QUAIL relative to SC, we find that in fact few ARGOS iterations are necessary. While QUAIL is made complicated for language models by its irregular form and often disjoint nature, its logical structure (while very ambiguous) is simple, meaning that we can solve QUAIL problems with only a small number of well-chosen propositions.}
    \label{fig:placeholder}
    \end{figure}
\end{document}